\newcommand*{\NIPS}{}
\newcommand*{\CAMREADY}{}
\newcommand*{\ARXIV}{}
                \renewcommand{\ICML@appearing}{}
                \renewcommand{\@noticestring}{}
\newtheorem{definition}{Definition}
\newtheorem{lemma}{Lemma}
\newtheorem{corollary}{Corollary}
\newtheorem{theorem}{Theorem}
\newtheorem{claim}{Claim}
\newcommand{\m}{{\mathbf m}}
\newcommand{\x}{{\mathbf x}}
\newcommand{\vv}{{\mathbf v}}
\newcommand{\w}{{\mathbf w}}
\newcommand{\aaa}{{\mathbf a}}
\newcommand{\A}{{\mathcal A}}
\newcommand{\B}{{\mathcal B}}
\newcommand{\D}{{\mathcal D}}
\newcommand{\M}{{\mathcal M}}
\newcommand{\PP}{{\mathbb P}}
\newcommand{\X}{{\mathcal X}}
\newcommand{\Y}{{\mathcal Y}}
\newcommand{\R}{{\mathbb R}}
\newcommand{\N}{{\mathbb N}}
\newcommand{\Q}{{\mathcal Q}}
\newcommand{\mubf}{{\boldsymbol{\mu}}}
\newcommand{\sigmabf}{{\boldsymbol{\sigma}}}
\newcommand{\abs}[1]{\left\lvert#1 \right\rvert}
\newcommand{\rank}[1]{\textrm{rank}\left( #1 \right)}
\newcommand{\shorteq}{\scalebox{0.8}[1]{\text{$=$}}}
\DeclareMathOperator*{\argmax}{argmax} 
\DeclareMathOperator*{\argmin}{argmin}
        \newcommand{\githuburl}[1]{\url{https://github.com/HUJI-Deep/#1}}
        \newcommand{\githuburl}[1]{\url{https://<anonymized>}}
\newcommand\gapaftersection{\vspace{0mm}}
\newcommand\gapaftersubsection{\vspace{0mm}}
\newcommand\gapbeforesection{\vspace{0mm}}
\newcommand\gapbeforesubsection{\vspace{0mm}}
	\icmltitlerunning{Tensorial Mixture Models}
\begin{document}
% TITLE AND AUTHORS
\ifdefined\ICML
        \twocolumn[
        \icmltitle{Tensorial Mixture Models}
        
        % list of affiliations. the first argument should be a (short)
        % identifier you will use later to specify author affiliations
        % Academic affiliations should list Department, University, City, Region, Country
        % Industry affiliations should list Company, City, Region, Country
        
        \begin{icmlauthorlist}
        \icmlauthor{Or Sharir}{huji}
        \icmlauthor{Ronen Tamari}{huji}
        \icmlauthor{Nadav Cohen}{huji}
        \icmlauthor{Amnon Shashua}{huji}
        \end{icmlauthorlist}
        
        \icmlaffiliation{huji}{Hebrew University of Jerusalem, Israel}
        
        \icmlcorrespondingauthor{Or Sharir}{or.sharir@cs.huji.ac.il}
%        \icmlcorrespondingauthor{Ronen Tamari}{ronent@cs.huji.ac.il}
%        \icmlcorrespondingauthor{Nadav Cohen}{cohennadav@cs.huji.ac.il}
%        \icmlcorrespondingauthor{Amnon Shashua}{shashua@cs.huji.ac.il}
        
        % You may provide any keywords that you 
        % find helpful for describing your paper; these are used to populate 
        % the "keywords" metadata in the PDF but will not be shown in the document
        \icmlkeywords{Deep Learning, Generative Models, Tractable Inference, Missing Data, Tensors}
        
        \vskip 0.3in
        ]

        \printAffiliationsAndNotice{}
\fi

\ifdefined\NIPS
        \title{Tensorial Mixture Models}
        \author{
          Or~Sharir\\
          Department of Computer Science\\
          The Hebrew University of Jerusalem\\
          Israel \\
          \texttt{or.sharir@cs.huji.ac.il} \\
          \And
          Ronen~Tamari\\
          Department of Computer Science\\
          The Hebrew University of Jerusalem\\
          Israel \\
          \texttt{ronent@cs.huji.ac.il} \\
          \And
          Nadav~Cohen\\
          Department of Computer Science\\
          The Hebrew University of Jerusalem\\
          Israel \\
          \texttt{cohennadav@cs.huji.ac.il} \\
          \And
          Amnon~Shashua\\
          Department of Computer Science\\
          The Hebrew University of Jerusalem\\
          Israel \\
          \texttt{shashua@cs.huji.ac.il} \\
        }
        \maketitle
\fi

% ABSTRACT
\begin{abstract}

Casting neural networks in generative frameworks is a highly sought-after endeavor these days.
Contemporary methods, such as Generative Adversarial Networks, capture some of the generative capabilities, but not all.
In particular, they lack the ability of tractable marginalization, and thus are not suitable for many tasks.
Other methods, based on arithmetic circuits and sum-product networks, do allow tractable marginalization, but their performance is challenged by the need to learn the structure of a circuit. Building on the tractability of arithmetic circuits, we leverage concepts from tensor analysis, and derive a family of generative models we call Tensorial Mixture Models (TMMs).
TMMs assume a simple convolutional network structure, and in addition, lend themselves to theoretical analyses that allow comprehensive understanding of the relation between their structure and their expressive properties.
We thus obtain a generative model that is tractable on one hand, and on the other hand, allows effective representation of rich distributions in an easily controlled manner.
These two capabilities are brought together in the task of classification under missing data, where TMMs deliver state of the art accuracies with seamless implementation and design.

\end{abstract}

% INTRODUCTION
\section{Introduction} \label{sec:intro}
\gapaftersection

There have been many attempts in recent years to marry generative models with neural networks,
including successful methods, such as Generative Adversarial Networks~\citep{Goodfellow:2014td},
Variational Auto-Encoders~\citep{Kingma:2013tz}, NADE~\citep{JMLR:v17:16-272}, and PixelRNN~\citep{vandenOord:2016um}.
Though each of the above methods has demonstrated its usefulness on some tasks, it is yet unclear if
their advantage strictly lies in their generative nature or some other attribute. More broadly, we
ask if combining generative models with neural networks could lead to methods who have a {clear advantage}
over purely discriminative models.

On the most fundamental level, if $X$ stands for an instance and~$Y$ for its class, generative models learn $\PP(X,Y)$, from which
we can also infer $\PP(Y|X)$, while discriminative models learn only $\PP(Y|X)$. It might not be immediately apparent if this sole
difference leads to any advantage. In~\citet{Ng:2001wg}, this question was studied with respect to the sample complexity, proving
that under \emph{some cases} it can be significantly lesser in favor of the generative classifier. We wish to highlight
a more clear cut case, by examining the problem of classification under missing data~--~where the value of some of the entries
of $X$ are unknown at prediction time. Under these settings, discriminative classifiers typically rely on some
form of data imputation, i.e. filling missing values by some auxiliary method prior to prediction. Generative classifiers,
on the other hand, are naturally suited to handle missing values through marginalization~--~effectively
assessing every possible completion of the missing values. Moreover, under mild assumptions, this method is optimal
\emph{regardless of the process by which values become missing} (see sec.~\ref{sec:missing_data}).

It is evident that such application of generative models assumes we can efficiently and exactly compute $\PP(X,Y)$, a process known as \emph{tractable inference}.
Moreover, it assumes we may efficiently marginalize over any subset of $X$, a procedure we refer to as \emph{tractable marginalization}. 
Not all generative models have both of these properties, and specifically
not the ones mentioned in the beginning of this section. Known models that do possess these properties, e.g. Latent Tree
Model~\citep{Mourad:2013kz}, have other limitations. A detailed discussion can be found in sec.~\ref{sec:related_works},
but in broad terms, all known generative models possess one of the following shortcomings: (i) they are insufficiently expressive to
model high-dimensional data (images, audio, etc.), (ii) they require explicitly designing all the dependencies of the data, or
(iii)~they do not have tractable marginalization. Models based on neural networks typically solve~(i) and~(ii), but are
incapable of~(iii), while more classical methods, e.g. mixture models, solve~(iii) but suffer from~(i) and~(ii). 

There is a long history of specifying tractable generative models through arithmetic circuits and sum-product 
networks~\citep{Darwiche:2003hx,Poon:2012vd}~--~computational graphs comprised solely of product and
weighted sum nodes. To address the shortcomings
above, we take a similar approach, but go one step further and leverage tensor analysis to distill it to a specific family
of models we call Tensorial Mixture Models. 
A Tensorial Mixture Model assumes a convolutional network structure, but as opposed to previous methods tying generative models with neural networks, lends itself to theoretical analyses that allow a thorough understanding of the relation between its structure and its expressive properties.
We thus obtain a generative model that is tractable on one hand, and on the other hand, allows effective representation of rich distributions in an easily controlled manner.

% PROBABILISTIC MODEL
\gapbeforesection
\section{Tensorial Mixture Models} \label{sec:model}
\gapaftersection

One of the simplest types of tractable generative models are mixture models, where the probability
distribution is defined שד the convex combination of $M$ mixing components % $\{\PP(\x|d;\theta_d)\}_{d=1}^M$
(e.g. Normal distributions): $\PP(\x) = \sum\nolimits_{d=1}^M \PP(d) \PP(\x|d;\theta_d)$.
Mixture models are very easy to learn, and many of them are able to approximate any probability
distribution, given sufficient number of components, rendering them suitable for a variety of tasks. 
The disadvantage of classic mixture models is that they do not scale will to high dimensional data (``curse of dimensionality'').
%Despite the advantages of mixture models, they do not scale well to high dimensional data. 
To address this challenge, we extend mixture models, leveraging the fact many high dimensional domains (e.g.~images) are typically comprised of small, simple local structures. 
We represent a high dimensional instance as $X = (\x_1,\ldots,\x_N)$~--~an $N$-length sequence
of $s$-dimensional vectors $\x_1,\ldots,\x_N~\in~\R^s$ (called \emph{local structures}). $X$ is typically thought of as
an image, where each local structure $\x_i$ corresponds to a local patch from that image, where no two patches are
overlapping. We assume that the distribution of individual local structures can be efficiently modeled by
some mixture model of few components, which for natural image patches, was shown to be the case~\citep{Zoran:2011jn}.
Formally, for all $i \in [N]$ there exists $d_i \in [M]$ such that $\x_i \sim P(\x|d_i;\theta_{d_i})$, where $d_i$
is a hidden variable specifying the matching component for the $i$-th local structure. The probability
density of sampling $X$ is thus described by:
\begin{align}\label{eq:tmm}
P(X) &= \sum\nolimits_{d_1,\ldots,d_N=1}^M P(d_1,\ldots,d_N) \prod\nolimits_{i=1}^N P(\x_i | d_i; \theta_{d_i})
\end{align} 
where $P(d_1,\ldots,d_N)$ represents the prior probability of assigning components
$d_1,\ldots,d_N$ to their respective local structures $\x_1,\ldots,\x_N$. As with classical mixture models,
any probability density function $\PP(X)$ could be approximated arbitrarily well by eq.~\ref{eq:tmm},
as $M \to \infty$ (see app.~\ref{app:universal}).

At first glance, eq.~\ref{eq:tmm} seems to be impractical, having an exponential number of terms. In the literature,
this equation is known as the ``Network Polynomial''~\citep{Darwiche:2003hx}, and the traditional method to
overcome its intractability is to express $P(d_1,\ldots,d_N)$ by an arithmetic circuit, or sum-product networks,
following certain constraints (decomposable and complete).
We augment this method by viewing $P(d_1,\ldots,d_N)$ from an algebraic perspective, treating it as a tensor of
order $N$ and dimension $M$ in each mode, i.e., as a multi-dimensional array, $\A_{d_1,\ldots,d_N}$ specified by $N$
indices $d_1,\ldots,d_N$, each ranging in $[M]$, where $[M]{\equiv}\{1,\ldots,M\}$.
We refer to $\A_{d_1,\ldots,d_N}{\equiv}P(d_1,\ldots,d_N)$ as the \emph{prior tensor}.
Under this perspective, eq.~\ref{eq:tmm} can be thought of as a mixture model with tensorial mixing weights, thus
we call the arising models \emph{Tensorial Mixture Models}, or TMMs for short.

\gapbeforesubsection
\subsection{Tensor Factorization, Tractability, and Convolutional Arithmetic Circuits}
\gapaftersubsection

Not only is it intractable to compute eq.~\ref{eq:tmm}, but it is also impossible to even store the prior tensor.
We argue that addressing the latter is intrinsically tied to addressing the former. For example, if we impose
a sparsity constraint on the prior tensor, then we only need to compute the few non-zero terms of eq.~\ref{eq:tmm}.
TMMs with sparsity constraints can represent common generative models, e.g. GMMs~(see app.~\ref{app:sparsity_example}).
However, they do not take full advantage of the prior tensor.
Instead, we consider constraining TMMs with prior tensors that adhere to non-negative low-rank factorizations.

We begin by examining the simplest case, where the prior tensor $\A$ takes a \emph{rank-1} form, i.e.
there exist vectors $\vv^{(1)},\ldots,\vv^{(N)} \in \R^M$ such that $\A_{d_1,\ldots,d_N} = \prod_{i=1}^N v^{(i)}_{d_i}$, or
in tensor product notation, $\A = \vv^{(1)} \otimes \cdots \otimes \vv^{(N)}$.
If we interpret\footnote{$\A$ represents a probability,
and w.l.o.g. we can assume all entries of $\vv^{(i)}$ are non-negative and $\sum_{d{=}1}^M v^{(i)}_d{\shorteq}1$}
$v^{(i)}_d = P(d_i{=}d)$ as a probability over $d_i$, and so $P(d_1,\ldots,d_N) = \prod_i P(d_i)$, then it reveals that imposing
a rank-1 constraint is actually equivalent to assuming the hidden variables $d_1,\ldots,d_N$ are statistically independent. Applying it to
eq.~\ref{eq:tmm} results in the tractable form $P(X)=\prod_{i=1}^N \sum_{d=1}^M P(d_i{=}d) P(\x_i|d_i,\theta_{d_i})$, or in
other words, a product of mixture models. Despite the familiar setting, this strict assumption severely limits expressivity.

\begin{figure}
\centering
\includegraphics[width=0.9\linewidth]{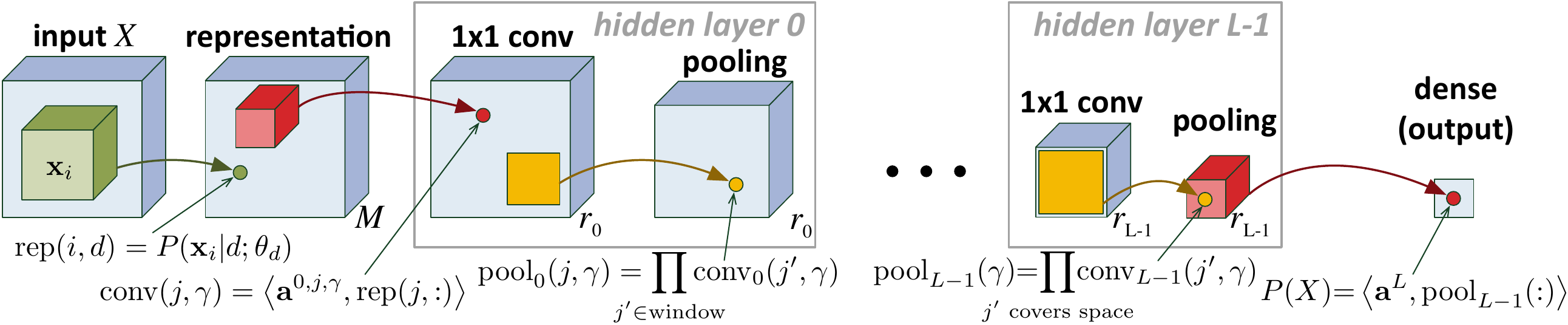}
\vspace{-1.5mm}
\caption{A generative variant of Convolutional Arithmetic Circuits.}
\label{fig:generative_convac}
\vspace{-1.5mm}
\end{figure}

In a broader setting, we look at general factorization schemes that given sufficient resources could represent any tensor.
Namely, the CANDECOMP/PARAFAC~(CP) and the Hierarchical Tucker~(HT) factorizations. The CP
factorization is simply a sum of rank-1 tensors, extending the previous case, and HT factorization can be seen as
a recursive application of CP (see def. in app.~\ref{app:tensor_background}). Since both factorization schemes are
solely based on product and weighted sum operations, they could be realized through arithmetic circuits. 
As shown by \citet{expressive_power}, this gives rise to a specific class of convolutional networks named
Convolutional Arithmetic Circuits~(ConvACs), which consist of $1{\times}1$-convolutions,
non-overlapping product pooling layers, and linear activations. More specifically, the CP factorization corresponds
to shallow ConvACs, HT corresponds to deep ConvACs, and the number of channels in each layer corresponds
to the respective concept of ``rank'' in each factorization scheme. 
In general, when a tensor factorization is applied to eq.~\ref{eq:tmm}, inference is equivalent to first computing the
likelihoods of all mixing components $\{P(\x_i|d;\theta_d)\}_{d{\shorteq}1,i{\shorteq}1}^{M,N}$, in what we call the \emph{representation} layer,
followed by a ConvAC. A complete network is illustrated in fig.~\ref{fig:generative_convac}.

\begin{wrapfigure}{r}{.3\columnwidth}
\centering
\includegraphics[width=\linewidth]{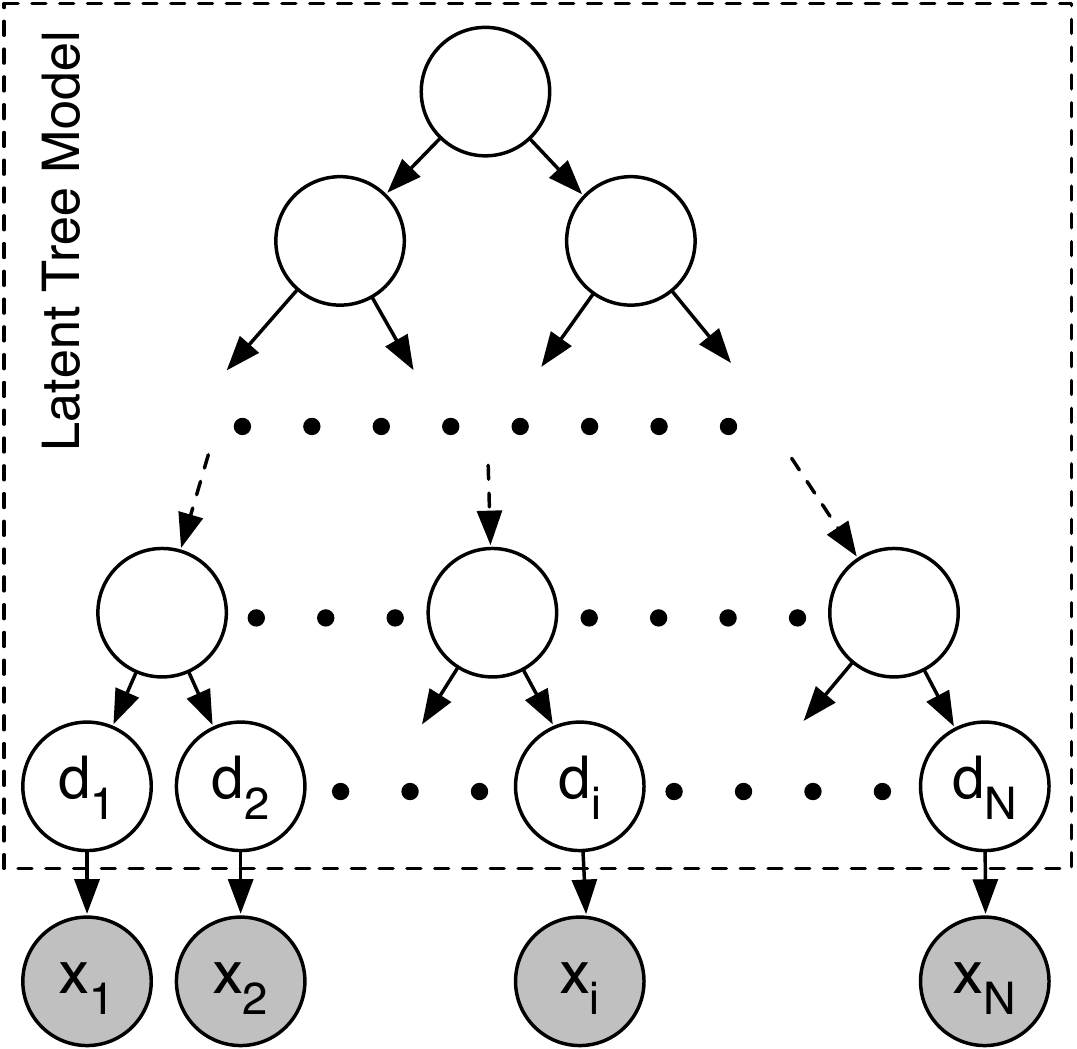}
\caption{Graphical model description of HT-TMM}
\label{fig:graphical_model}
\end{wrapfigure}

When restricting the prior tensor of eq.~\ref{eq:tmm} to a factorization, we must
ensure it represents actual probabilities, i.e. it is non-negative and its entries sum to one.
This can be addressed through a restriction to non-negative factorizations, which translates to
limiting the parameters of each convolutional kernel to the simplex.
There is a vast literature on the relations between non-negative factorizations and generative
models~\citep{Hofmann:1999vka,Mourad:2013kz}.
As opposed to most of these works, we apply factorizations merely to
derive our model and analyze its expressivity~--~not for learning its parameters~(see sec.~\ref{sec:training}).

From a generative perspective, the restriction of convolutional kernels to the simplex results in a latent tree graphical model, as illustrated in
fig.~\ref{fig:graphical_model}.  Each \emph{hidden layer} in the ConvAC network~--~a pair of convolution
and pooling operations, corresponds to a transition between two levels in the tree. More specifically, each level
is comprised of multiple latent variables, one for each spatial position in the input to a hidden layer in the network. Each
latent variable in the input to the $l$-th layer takes values in $[r_{l-1}]$~--~the number of channels in the layer that precedes it. 
Pooling operations in the network correspond to the parent-child relationships in the tree~--~a set of latent variables are siblings with
a shared parent in the tree, if they are positioned in the same pooling window in the network. The weights of convolution operations correspond to the
transition matrix between a parent and each of its children, i.e.~if $H_p$ is the parent latent variable, taking
values in $[r_l]$, and $H_{child}$ is one of its child variables, taking values in $[r_{l-1}]$, then
$P(H_{child}{\shorteq}i| H_p{\shorteq}c) \shorteq w^{(c)}_i$, where $\w^{(c)}$ is the $1{\times}1$ convolutional
kernel for the $c$-th output channel. With the above graphical representation in place, we can
easily draw samples from our model.

To conclude this subsection, by leveraging an algebraic perspective of the network polynomial (eq.~\ref{eq:tmm}), we show
that tractability is related to the tensor properties of the priors, and in particular, that low rank factorizations are equivalent
to inference via ConvACs. 
The application of arithmetic circuits to achieve tractability is by itself not a novelty.
However, the particular convolutional arithmetic circuits we propose lead to a comprehensive understanding of
representational abilities, and as a result, to a straightforward architectural design of TMMs.

\gapbeforesubsection
\subsection{Controlling the Expressivity and Inductive Bias of TMMs}\label{sec:theory}
\gapaftersubsection

As discussed in sec.~\ref{sec:intro}, it is not enough for a generative model to be tractable~--~it must
also be sufficiently expressive, and moreover, we must also be able to understand how its structure affects its
expressivity. In this section we explain how our algebraic perspective enables us to achieve this.

To begin with, since we derived our model by factorizing the prior tensor, it immediately follows that given sufficient
number of channels in the ConvAC, i.e.~given sufficient ranks in the tensor factorization, any distribution could be
approximated arbitrarily well (assuming~$M$ is allowed to grow).
In short, this amounts to saying that TMMs are universal.
Though many other generative models are known to be universal, it is typically not clear how
one may assess what a given structure of finite size can and cannot express. 
In contrast, the expressivity of ConvACs
has been throughly studied in a series of works~\citep{expressive_power,inductive_bias,Cohen:0ZJHmEow,Levine:2017wt},
each of which examined a different attribute of its structure. In \citet{expressive_power} it was proven that ConvACs
exhibit the Depth Efficiency property, i.e. deep networks are exponentially more expressive than shallow ones.
In \citet{inductive_bias} it was shown that deep networks can efficiently model some input correlations but not all,
and that by designing appropriate pooling schemes, different preferences may be encoded, i.e.~the inductive bias
may be controlled. In \citet{Cohen:0ZJHmEow} this result was extended to more complex connectivity patterns,
involving mixtures of pooling schemes. Finally, in \citet{Levine:2017wt}, an exact relation between the number of
channels and the correlations supported by a network has been found, enabling tight control over expressivity and
inductive bias. All of these results are brought forth by the relations of ConvACs to tensor factorizations.
They allow TMMs to be analyzed and designed in much more principled ways than alternative high-dimensional generative models.\footnote{
As a demonstration of the fact that ConvAC analyses are not affected by the non-negativity and normalization restrictions of our
generative variant, we prove in app.~\ref{app:depth_efficiency} that the Depth Efficiency property still holds.}

\gapbeforesubsection
\subsection{Classification and Learning}\label{sec:training}
\gapaftersubsection

TMMs realized through ConvACs, sharing many of the same traits as ConvNets, are especially
suitable to serve as classifiers. We begin by introducing a class variable $Y$, and
model the conditional likelihood $\PP(X|Y{\shorteq}y)$ for each $y\in [K]$. Though it is
possible to have separate generative models for each class, it is much more efficient to
leverage the relation to ConvNets and use a shared ConvAC instead, which is equivalent to a joint-factorization
of the prior tensors for all classes. This results in a single network, where instead of a single scalar
output representing $\PP(X)$, multiple outputs are driven by the network, representing $\PP(X|Y{\shorteq}y)$ for each class~$y$.
Predicting the class of a given instance is carried through Maximum A-Posteriori, i.e.~by returning the most likely class.
In the common setting of uniform class priors, i.e. $\PP(Y{\shorteq}y){\equiv}\frac{1}{K}$, this corresponds to classification by
maximal network output, as customary with ConvNets. We note that in practice, na\"{\i}ve implementation
of ConvACs is not numerically stable\footnote{Since high degree polynomials (as computed by
ACs) are susceptible to numerical underflow or overflow.}, and this is treated by performing all
computations in log-space, which transforms ConvACs into \emph{SimNets}~--~a recently introduced deep
learning architecture~\citep{simnets1,simnets2}. 

Suppose now that we are given a training set $S=\{(X^{(i)}{\in}(\R^s)^N,Y^{(i)}{\in}[K])\}_{i=1}^{|S|}$ of instances and labels,
and would like to fit the parameters $\Theta$ of our model according to the Maximum Likelihood principle, or equivalently,
by minimizing the Negative Log-Likelihood~(NLL) loss function: $\mathcal{L}(\Theta) = \mathbb{E}[-\log \PP_\Theta(X,Y)]$.
The latter can be factorized into two separate loss terms:
\begin{align*}
\mathcal{L}(\Theta) = \mathbb{E}[-\log \PP_\Theta(Y|X)] + \mathbb{E}[-\log \PP_\Theta(X)]
\end{align*}
where $\mathbb{E}[-\log \PP_\Theta(Y|X)]$, which we refer to as
the \emph{discriminative loss}, is commonly known as the cross-entropy loss,
and $\mathbb{E}[-\log \PP_\Theta(X)]$, which corresponds to maximizing the prior
likelihood $\PP(X)$, has no analogy in standard discriminative classification. 
It is this term that captures
the generative nature of the model, and we accordingly refer to it as the \emph{generative loss}.
Now, let $N_\Theta(X^{(i)};y){:=}\log \PP_\Theta(X^{(i)}|Y{=}y)$ stand for the $y$'th output of the SimNet
(ConvAC in log-space) realizing our model with parameters~$\Theta$.
In the case of uniform class
priors ($\PP(Y{=}y)\equiv\nicefrac{1}{K}$), the empirical estimation of $\mathcal{L}(\Theta)$ may be written as:
\begin{align}
\mathcal{L}(\Theta;S) = -\frac{1}{\abs{S}}\sum\nolimits_{i=1}^{\abs{S}} \log \frac{e^{N_\Theta(X^{(i)};Y^{(i)})}}{\sum\nolimits_{y=1}^{K}e^{N_\Theta(X^{(i)};y)}}
- \frac{1}{\abs{S}}\sum\nolimits_{i=1}^{\abs{S}} \log \sum\nolimits_{y=1}^{K}e^{N_\Theta(X^{(i)};y)}
\label{eq:objective}
\end{align}
This objective includes the standard softmax loss as its first term, and an additional generative loss as its second.
Rather than employing dedicated Maximum Likelihood methods for training (e.g. Expectation Minimization),
we leverage once more the resemblance between our networks and ConvNets, and optimize the above objective
using Stochastic Gradient Descent~(SGD).

% CLASSIFICATION UNDER MISSING DATA THROUGH MARGINALIZATION
\gapbeforesection
\section{Classification under Missing Data through Marginalization}\label{sec:missing_data}
\gapaftersection

A major advantage of generative models over discriminative ones lies in their ability to cope with missing data,
specifically in the context of classification. By and large, discriminative methods either attempt to complete missing
parts of the data before classification (a process known as \emph{data imputation}), or learn directly
to classify data with missing values~\citep{Little:2002uj}. The first of these approaches relies on the quality of
data completion, a much more difficult task than the original one of classification under missing data. Even if
the completion was optimal, the resulting classifier is known to be sub-optimal~(see app.~\ref{app:mbayes_proof}).
The second approach does not rely on data completion, but nonetheless assumes that the distribution of missing
values at train and test times are similar, a condition which often does not hold in practice. Indeed, \citet{Globerson:2006jv}
coined the term ``nightmare at test time'' to refer to the common situation where a classifier must cope with
missing data whose distribution is different from that encountered in training.

As opposed to discriminative methods, generative models are endowed with a natural mechanism for
classification under missing data. Namely, a generative model can simply marginalize over missing values,
effectively classifying under all possible completions, weighing each completion according to its probability. 
This, however, requires tractable inference and marginalization. We have already shown in sec.~\ref{sec:model}
that TMMs support the former, and will show in sec.~\ref{sec:missing_data:margin} that marginalization
can be just as efficient. Beforehand, we lay out the formulation of classification under missing data.

Let $\X$ be a random vector in~$\R^s$ representing an object, and let $\Y$ be a random variable in~$[K]$
representing its label. Denote by~$\D(\X,\Y)$ the joint distribution of~$(\X,\Y)$, and by
$(\x{\in}\R^s,y{\in}[K])$ specific realizations thereof. Assume that after sampling a specific instance $(\x,y)$, a
random binary vector $\M$ is drawn conditioned on $\X{=}\x$. More concretely, we sample a binary mask
$\m{\in}\{0,1\}^s$ (realization of~$\M$) according to a distribution $\Q(\cdot|\X{=}\x)$. $x_i$
is considered missing if~$m_i$ is equal to zero, and observed otherwise. Formally, we
consider the vector $\x{\odot}\m$, whose~$i$'th coordinate is defined to hold~$x_i$ if $m_i{=}1$, and the
wildcard~$*$ if $m_i{=}0$. The classification task is then to predict~$y$ given access solely to $\x{\odot}\m$.

Following the works of~\citet{Rubin:1976gv,Little:2002uj}, we consider three cases for the missingness
distribution $\Q(\M{=}\m|\X{=}\x)$: missing completely at random~(\emph{MCAR}), where~$\M$ is independent
of~$\X$, i.e.~$\Q(\M{=}\m|\X{=}\x)$ is a function of~$\m$ but not of~$\x$; missing at random~(\emph{MAR}),
where~$\M$ is independent of the missing values in~$\X$, i.e.~$\Q(\M{=}\m|\X{=}\x)$ is a function of both~$\m$
and $\x$, but is not affected by changes in~$x_i$ if~$m_i{=}0$; and missing not at random~(\emph{MNAR}),
covering the rest of the distributions for which~$\M$ depends on missing values in~$\X$, i.e.~$\Q(\M{=}\m|\X{=}\x)$
is a function of both~$\m$ and $\x$, which at least sometimes is sensitive to changes in~$x_i$ when~$m_i{=}0$.

Let $\PP$ be the joint distribution of the object~$\X$, label~$\Y$, and missingness mask~$\M$:
\begin{align*}
\PP(\X{\shorteq}\x,\Y{\shorteq}y,\M{\shorteq}\m) = \D\left(\X{\shorteq}\x, \Y{\shorteq}y\right) \cdot \Q(\M{\shorteq}\m|\X{\shorteq}\x)
\end{align*}
For given $\x{\in}\R^s$ and $\m{\in}\{0,1\}^s$, denote by $o(\x,\m)$ the event where the random vector~$\X$
coincides with~$\x$ on the coordinates~$i$ for which $m_i{=}1$. For example, if~$\m$ is an all-zero vector,
$o(\x,\m)$ covers the entire probability space, and if~$\m$ is an all-one vector, $o(\x,\m)$ corresponds to
the event $\X{=}\x$. With these notations in hand, we are now ready to characterize the optimal
predictor in the presence of missing data. The proofs are common knowledge, but provided in app.~\ref{app:mbayes_proof}
for completeness.
\begin{claim}\label{claim:optimal_rule}
For any data distribution~$\D$ and missingness distribution~$\Q$, the optimal classification rule in terms of 0-1 loss is given by
predicting the class $y\in[K]$, that maximizes $\PP(\Y{\shorteq}y|o(\x,\m))\cdot\PP(\M{\shorteq}\m|o(\x,\m),\Y{\shorteq}y)$, for 
an instance $\x {\odot} \m$.
\end{claim}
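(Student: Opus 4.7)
The plan is to reduce the claim to a standard Bayes-optimality argument for 0-1 loss, being careful about exactly what event is observed when the predictor sees $\x \odot \m$.

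First I would recall that the Bayes-optimal rule with respect to 0-1 loss, conditioned on any observed event $E$, is to predict $\argmax_{y \in [K]} \PP(\Y{=}y \mid E)$; this is a well-known fact that I would state and apply without reproof. The substantive step is identifying the event $E$ corresponding to the predictor's information. Given the input $\x \odot \m$, what is known is precisely (i) the mask $\M = \m$, and (ii) that $\X$ agrees with $\x$ on the coordinates $i$ with $m_i = 1$; the latter is by definition the event $o(\x,\m)$. Thus $E = o(\x,\m) \cap \{\M{=}\m\}$, and the optimal rule predicts the $y$ maximizing $\PP(\Y{=}y \mid o(\x,\m), \M{=}\m)$.

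Next I would apply Bayes' rule in its product form to rewrite
\begin{align*}
\PP(\Y{=}y \mid o(\x,\m), \M{=}\m)
= \frac{\PP(\Y{=}y,\, \M{=}\m \mid o(\x,\m))}{\PP(\M{=}\m \mid o(\x,\m))}
= \frac{\PP(\Y{=}y \mid o(\x,\m)) \cdot \PP(\M{=}\m \mid o(\x,\m), \Y{=}y)}{\PP(\M{=}\m \mid o(\x,\m))}.
\end{align*}
The denominator is a positive quantity independent of $y$, so the $\argmax$ over $y$ of the left-hand side coincides with the $\argmax$ over $y$ of $\PP(\Y{=}y \mid o(\x,\m)) \cdot \PP(\M{=}\m \mid o(\x,\m), \Y{=}y)$, which is exactly the expression in the claim.

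There is no real obstacle; the only point requiring care is the first one, namely making explicit that the observed event is $o(\x,\m) \cap \{\M{=}\m\}$ rather than $\{\X{=}\x\}$, since conflating these is the common source of the sub-optimality of imputation-based approaches discussed in the surrounding text. I would state this identification explicitly so that later discussion of the MCAR/MAR/MNAR cases (where the factor $\PP(\M{=}\m \mid o(\x,\m), \Y{=}y)$ simplifies or vanishes from the $\argmax$) follows immediately as corollaries.
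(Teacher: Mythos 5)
Your argument is correct and is essentially the paper's proof in modular form: the paper's derivation (expanding $1-L(h)$ as an integral, marginalizing out the missing coordinates, and maximizing the integrand pointwise) is precisely a from-scratch proof of the ``standard fact'' you invoke, applied to the event $E = o(\x,\m)\cap\{\M{=}\m\}$, and your Bayes-rule factorization is the paper's step ``(3) by conditional probability definition.'' The one thing the paper's explicit integral computation buys that your citation glosses over is rigor in the continuous case: $o(\x,\m)$ has measure zero when $\X$ has a density, so ``conditioning on $o(\x,\m)$'' must be read via conditional densities, and the standard MAP-optimality lemma must be applied to predictors that are functions of $\x\odot\m$ (equivalently, measurable with respect to the pair $(\m,\x_o)$) — the paper handles both by writing everything as $\sum_{\m}\int d\x_o$ of a pointwise-maximized integrand. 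Your identification of the observed event, and the remark that conflating $o(\x,\m)$ with $\{\X{=}\x\}$ is what dooms imputation, is exactly the right emphasis; just note additionally that the factorization requires $\PP(\M{=}\m\mid o(\x,\m))>0$, which holds for almost every input the predictor actually receives.
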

When the distribution~$\Q$ is MAR (or MCAR), the optimal classifier admits a simpler form, referred to as the \emph{marginalized Bayes predictor}:
\begin{corollary}\label{corollary:mar}
Under the conditions of claim~\ref{claim:optimal_rule}, if the distribution $\Q$ is MAR (or MCAR), the optimal classification rule may be written as:
\begin{equation}
h^*(\x \odot \m) = \argmax\nolimits_y~\PP(\Y{=}y|o(\x,\m))
\label{eq:mbayes}
\end{equation}
\end{corollary}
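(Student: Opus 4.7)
The strategy is to start from Claim~\ref{claim:optimal_rule} and show that under MAR the multiplicative factor $\PP(\M{=}\m\mid o(\x,\m),\Y{=}y)$ is independent of the class label $y$. Once this is established, that factor drops out of the $\argmax$ over $y$ and the rule reduces precisely to eq.~\ref{eq:mbayes}. Since MCAR is a strictly stronger condition than MAR, the MCAR case follows as an immediate special case and needs no separate argument.

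First I would expand the target factor by the law of total probability, conditioning on the full realization of $\X$. Writing $\x=(\x_{obs},\x_{miss})$ according to the pattern $\m$, and letting $\x'$ range over completions that agree with $\x$ on the observed coordinates (i.e.\ consistent with $o(\x,\m)$), this gives
\[
\PP(\M{=}\m\mid o(\x,\m),\Y{=}y) \;=\; \int \PP(\M{=}\m\mid \X{=}\x',\Y{=}y)\; \PP(\X{=}\x'\mid o(\x,\m),\Y{=}y)\; d\x'_{miss}.
\]
The joint factorization $\PP(\X,\Y,\M)=\D(\X,\Y)\cdot\Q(\M\mid\X)$ stated in the preamble to the corollary yields the conditional independence $\M\perp\Y\mid\X$, so the first factor in the integrand simplifies to $\Q(\M{=}\m\mid\X{=}\x')$ and loses its explicit dependence on $y$.

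Next I would invoke the MAR hypothesis, which says that $\Q(\M{=}\m\mid\X{=}\x')$ depends on $\x'$ only through its observed coordinates; these are pinned to $\x_{obs}$ throughout the integration. Hence $\Q(\M{=}\m\mid\X{=}\x')$ is constant in $\x'$ over the integration domain and pulls out of the integral, while what remains is the integral of a conditional density over its support, which is $1$. We conclude that $\PP(\M{=}\m\mid o(\x,\m),\Y{=}y)$ is a function only of $\m$ and $\x_{obs}$ and not of $y$, so Claim~\ref{claim:optimal_rule}'s rule collapses to eq.~\ref{eq:mbayes}. The only mild obstacle is the measure-theoretic bookkeeping around conditioning on the event $o(\x,\m)$ under a mix of discrete and continuous variables, but under the standard regularity tacitly assumed in the setup this is routine.
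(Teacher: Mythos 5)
Your proposal is correct and follows essentially the same route as the paper's proof: expand $\PP(\M{=}\m\mid o(\x,\m),\Y{=}y)$ by integrating over the missing coordinates, use the factorization $\PP(\X,\Y,\M)=\D(\X,\Y)\,\Q(\M\mid\X)$ to drop $y$ from the missingness factor, invoke MAR to make that factor constant over the integration domain, and conclude that it is independent of $y$ and hence drops from the $\argmax$ in Claim~\ref{claim:optimal_rule}. No substantive differences.
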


Corollary~\ref{corollary:mar} indicates that in the MAR setting, which is frequently encountered in practice,
optimal classification does \emph{not} require prior knowledge regarding the missingness distribution~$\Q$.
As long as one is able to realize the marginalized Bayes predictor (eq.~\ref{eq:mbayes}), or equivalently,
to compute the likelihoods of observed values conditioned on labels ($\PP(o(\x,\m)|Y{=}y)$), classification
under missing data is guaranteed to be optimal, regardless of the corruption process taking place. This is in
stark contrast to discriminative methods, which require access to the missingness distribution during training,
and thus are not able to cope with unknown conditions at test time.

Most of this section dealt with the task of prediction given an input with missing data, where we assumed
we had access to a ``clean'' training set, and only faced missingness during prediction.
However, many times we wish to tackle the reverse task, where the training set itself is riddled with
missing data. Tractability leads to an advantage here as well: under the MAR assumption,
learning from missing data with the marginalized likelihood objective results in an unbiased classifier~\citep{Little:2002uj}.

In the case of TMMs, marginalizing over missing values is just as efficient as plain inference~--~requires
only a single pass through the corresponding network. The exact mechanism is carried out in similar fashion
as in sum-product networks, and is covered in app.~\ref{sec:missing_data:margin}.  Accordingly, the marginalized
Bayes predictor (eq.~\ref{eq:mbayes}) is realized efficiently, and classification under missing data (in the MAR
setting) is optimal (under generative assumption), regardless of the missingness distribution.

\gapbeforesection
\section{Related Works} \label{sec:related_works}
\gapaftersection

There are many generative models realized through neural networks, and convolutional networks in particular,
e.g. Generative Adversarial Networks~\citep{Goodfellow:2014td}, Variational Auto-Encoders~\citep{Kingma:2013tz},
and NADE~\citep{JMLR:v17:16-272}. However, most do not posses tractable inference, and of the few that do,
non posses tractable marginalization over any set of variables. Due to limits of space, we defer the discussion
on the above to app.~\ref{app:extended_related_works}, and in the remainder of this section focus instead on
the most relevant works.

As mentioned in sec.~\ref{sec:model}, we build on the approach of specifying generative models through Arithmetic
Circuits~(ACs)~\citep{Darwiche:2003hx}, and specifically, our model is a strict subclass of the well-known Sum-Product
Networks~(SPNs)~\citep{Poon:2012vd}, under the decomposable and complete restrictions. Where our work
differs is in our algebraic approach to eq.~\ref{eq:tmm}, which gives rise to a specific structure of ACs, called ConvACs,
and a deep theory regarding their expressivity and inductive bias (see sec.~\ref{sec:theory}). 
In contrast to the structure we proposed, the current literature on general SPNs does
not prescribe any specific structures, and its theory is limited to either very specific instances~\citep{Delalleau:2011vh},
or very broad classes, e.g fixed-depth circuits~\citep{Martens:2014tr}. In the early works on SPNs, specialized networks
of complex structure were designed for each task based mainly on heuristics, often bearing little resemblance to common
neural networks. Contemporary works have since moved on to focus mainly on learning the structure
of SPNs directly from data~\citep{Peharz:2013cl,Gens:2013ufa,Adel:2015wf,Rooshenas:2014wb}, leading to improved results
in many domains. Despite that, only few published studies have applied this method to natural domains (images, audio, etc.),
on which only limited performance, compared to other common methods, was reported, specifically on the MNIST
dataset~\citep{Adel:2015wf}. The above suggests that choosing the right architecture of general SPNs, at least on some domains,
remains to be an unsolved problem. In addition, both the previously studied manually-designed SPNs, as well as ones with a learned
structure, lead to models, which according to recent works on GPU-optimized algorithms~\citep{maps-multi}, cannot be efficiently
implemented due to their irregular memory access patterns.  This is in stark contrast to our model, which leverages the same
patterns as modern ConvNets, and thus enjoys similar run-time performance. An additional difference in our
work is that we manage to successfully train our model using standard SGD. Even though this approach has already been
considered by~\citet{Poon:2012vd}, they deemed it lacking and advocated for specialized optimization algorithms instead.

Outside the realm of generative networks, tractable graphical models, e.g. Latent Tree Models~(LTMs)~\citep{Mourad:2013kz},
are the most common method for tractable inference. Similar to SPNs, it is not straightforward to find the proper structure
of graphical models for a particular problem, and most of the same arguments apply here as well. Nevertheless, it is
noteworthy that recent progress in structure and parameters learning of LTMs~\citep{Huang:2015tb,Anandkumar:2014uc}
was also brought forth by connections to tensor factorizations, similar to our approach. Unlike the aforementioned
algorithms, we utilize tensor factorizations solely for deriving our model and analyzing its expressivity, while leaving learning to
SGD~--~the most successful method for training neural networks. Leveraging their perspective to analyze
the optimization properties of our model is viewed as a promising avenue for future research.

\gapbeforesection
\section{Experiments} \label{sec:exp}
\gapaftersection

We demonstrate the properties of TMMs through both qualitative and
quantitative experiments. In sec.~\ref{subsec:exp:missing} we present state of the art results on image classification under missing data, with robustness
to various missingness distributions. In sec.~\ref{subsec:exp:timit} we show
that our results are not limited to images, by applying TMMs for speech recognition.
Finally, in app.~\ref{app:exp:vis} we show visualizations of samples drawn from TMMs, shedding light on their generative nature.
Our implementation, based on Caffe~\citep{Jia:2014up} and
MAPS~\citep{maps-multi} (toolbox for efficient GPU code generation), as well as all other code for reproducing our experiments, are available at: \githuburl{Generative-ConvACs}.
Extended details regarding the experiments are provided in app.~\ref{app:exp_details}. 

\gapbeforesubsection
\subsection{Image Classification under Missing Data}\label{subsec:exp:missing}
\gapaftersubsection

\begin{table}
\centering
\resizebox{0.5\columnwidth}{!}{%
\begin{tabular}{cccccccc}
\toprule
n=  & 0 & 25 & 50 & 75 & 100 & 125 & 150 \\ 
\midrule
LP    & 97.9 & 97.5 & 96.4 & 94.1 & 89.2 & 80.9 & 70.2 \\
HT-TMM & \textbf{98.5} & \textbf{98.2} & \textbf{97.8} & \textbf{96.5} & \textbf{93.9} & \textbf{87.1} & \textbf{76.3} \\
\bottomrule
\end{tabular}
}
\caption{Prediction for each two-class task of MNIST digits, under feature deletion noise.}
\label{table:exp_shamir}
\end{table}

\begin{figure}
\centering
\begin{subtable}{0.4\columnwidth}
\resizebox{\columnwidth}{!}{%
\begin{tabular}{lccccccc}
\toprule
\diagbox[width=5em, height= 1.7em]{$p_{\textrm{train}}$}{$p_{\textrm{test}}$} & 0.25 & 0.50 & 0.75 & 0.90 & 0.95 & 0.99 \\ 
\midrule
0.25           & 98.9             & 97.8             & 78.9             & 32.4             & 17.6             & 11.0 \\
0.50           & \textbf{99.1} & 98.6             & 94.6             & 68.1             & 37.9             & 12.9 \\
0.75           & 98.9             & \textbf{98.7} & \textbf{97.2} & 83.9             & 56.4             & 16.7 \\
0.90           & 97.6             & 97.5             & 96.7             & \textbf{89.0} & 71.0             & 21.3 \\
0.95           & 95.7             & 95.6             & 94.8             & 88.3             & \textbf{74.0} & 30.5 \\
0.99           & 87.3             & 86.7             & 85.0             & 78.2             & 66.2             & \textbf{31.3} \\
\midrule
i.i.d. (rand) & 98.7             & 98.4             & 97.0             & 87.6             & 70.6             & 29.6 \\
rects (rand)& 98.2             & 95.7             & 83.2             & 54.7             & 35.8             & 17.5 \\
\bottomrule
\end{tabular}}
\caption{MNIST with i.i.d. corruption}
\label{table:convnet_iid_bias}
\end{subtable}
\quad\quad
\begin{subfigure}{0.4\columnwidth}
\includegraphics[width=\columnwidth]{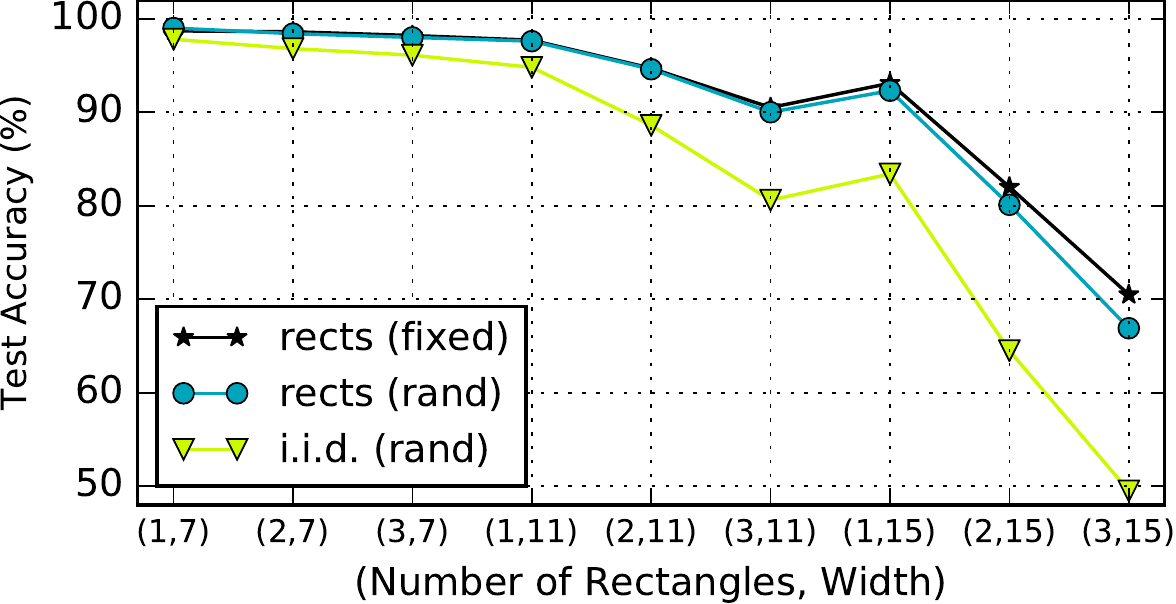}
\caption{MNIST with missing rectangles.}
\label{fig:convnet_mnist_rects}
\end{subfigure}
\caption{We examine ConvNets trained on one missingness distribution while tested on others.
             ``(rand)'' denotes training on distributions with randomized parameters.
              \textbf{(\subref{table:convnet_iid_bias})}~i.i.d. corruption: trained with probability $p_{\textrm{train}}$
              and tested on $p_{\textrm{test}}$.
              \textbf{(\subref{fig:convnet_mnist_rects})}~missing rectangles: training on randomized distributions (rand) compared
              to training on the same (fixed) missing rectangles distribution.
              }
\label{fig:convnet}
\end{figure}

\begin{figure}
\centering
\begin{subfigure}{0.48\textwidth}
\includegraphics[width=\textwidth]{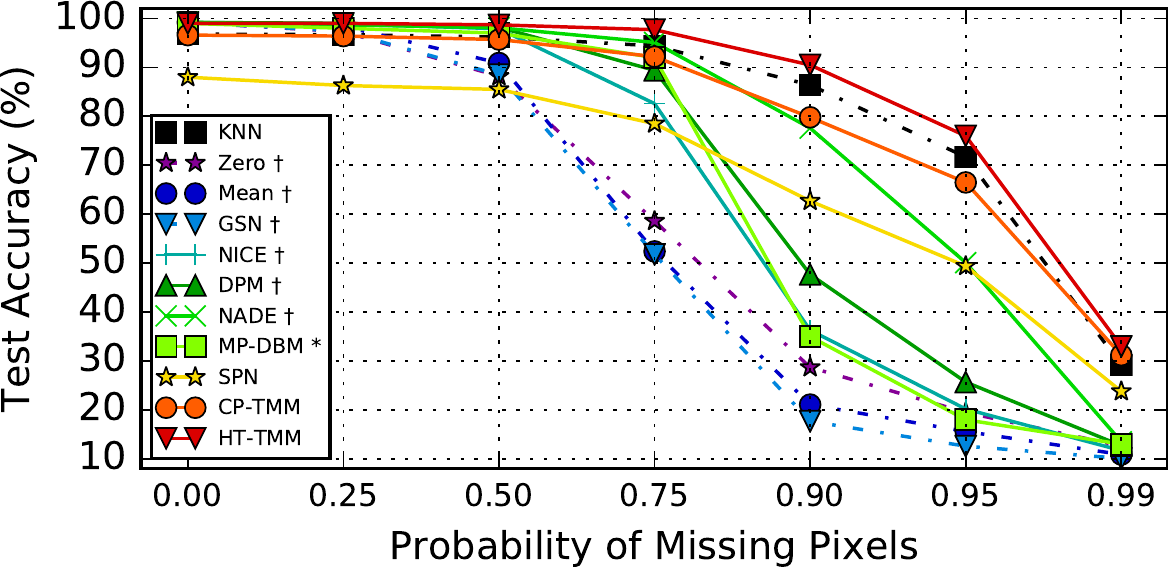}
\caption{MNIST with i.i.d. corruption.}
\label{fig:mnist_iid}
\end{subfigure}
~\quad
\begin{subfigure}{0.48\textwidth}
\includegraphics[width=\textwidth]{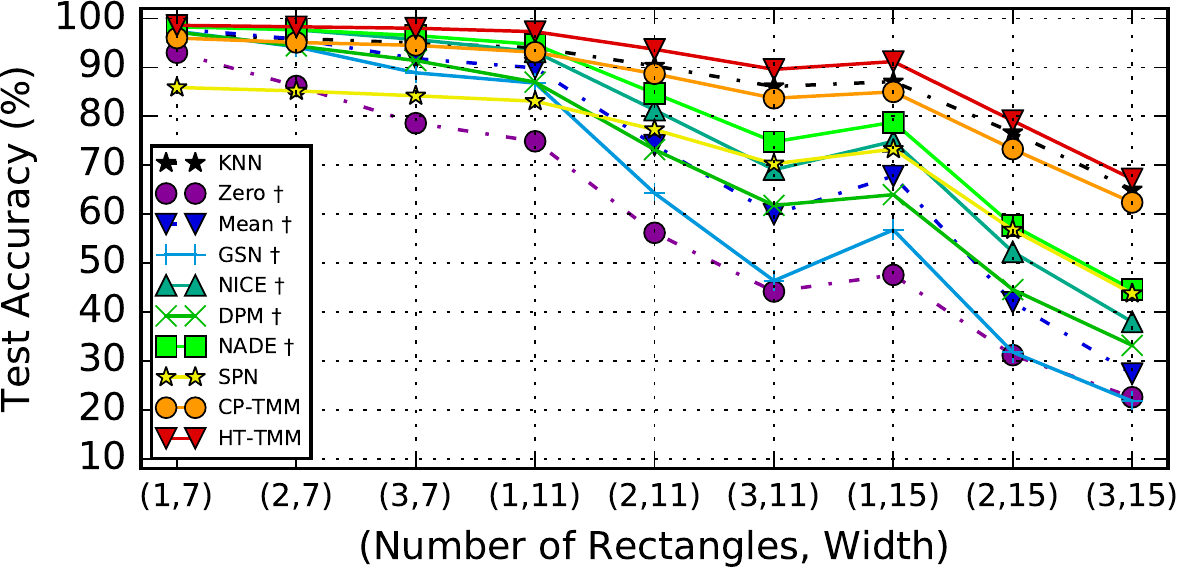}
\caption{MNIST with missing rectangles.}
\label{fig:mnist_rects}
\end{subfigure}

\begin{subfigure}{0.48\textwidth}
\includegraphics[width=\textwidth]{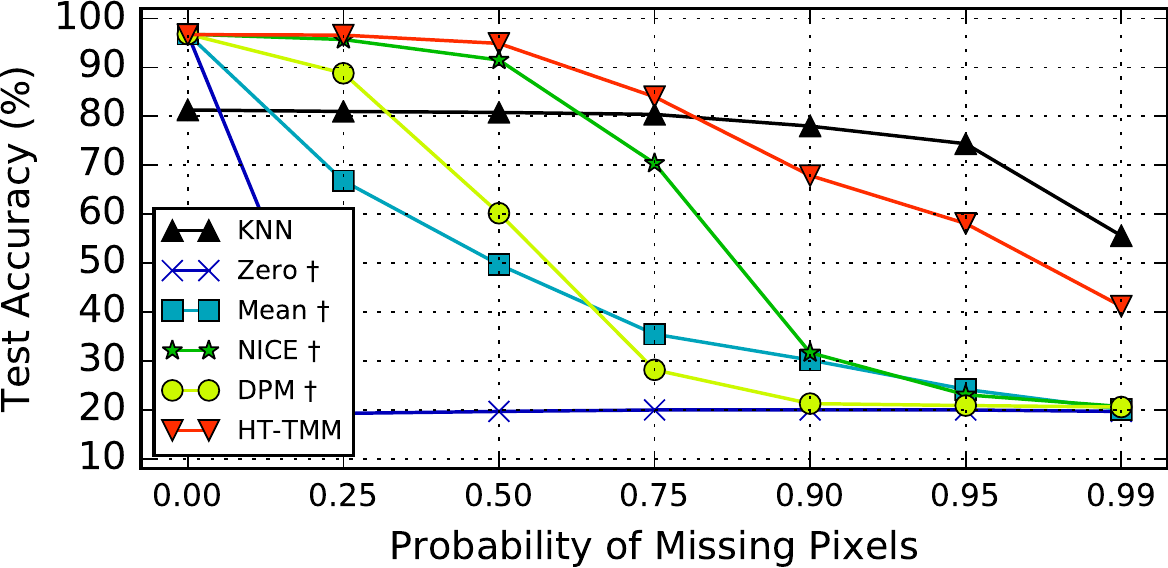}
\caption{NORB with i.i.d. corruption.}
\label{fig:norb_iid}
\end{subfigure}
~\quad
\begin{subfigure}{0.48\textwidth}
\includegraphics[width=\textwidth]{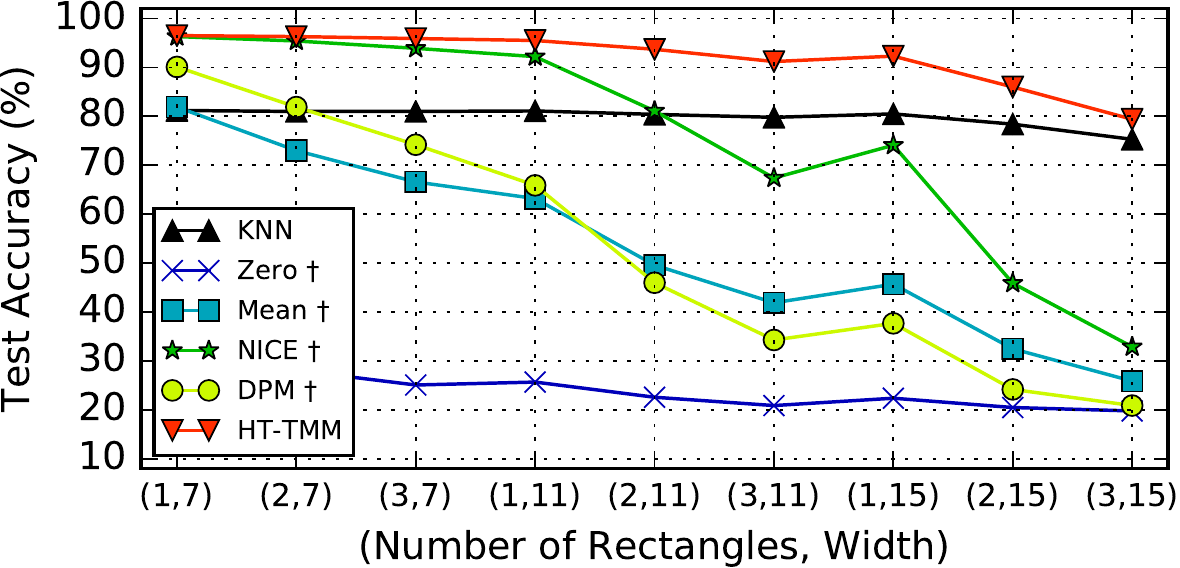}
\caption{NORB with missing rectangles.}
\label{fig:norb_rects}
\end{subfigure}
\caption{Blind classification under missing data. \textbf{(\subref{fig:mnist_iid},\subref{fig:norb_iid})}~Testing
               i.i.d. corruption with probability $p$ for each pixel.
              \textbf{(\subref{fig:mnist_rects},\subref{fig:norb_rects})}~Testing missing rectangles
              corruption with $n$ missing rectangles, each of width and hight equal to $W$.
              (*)~Based on the published results~\citep{Goodfellow:2013vm}.
             (\dag)~Data imputation algorithms.}
\label{fig:exp:multiclass}
\end{figure}

In this section we experiment on two datasets: MNIST~\citep{LeCun:1998hy} for digit classification,
and small NORB~\citep{LeCun:2004wl} for 3D object recognition. In our results, we refer to models
using shallow networks as CP-TMM, and to those using deep networks as HT-TMM, in accordance with the respective tensor
factorizations~(see sec.~\ref{sec:model}). The theory discussed in sec.~\ref{sec:theory} guided
our exact choice of architectures.  Namely, we used the fact~\citep{Levine:2017wt} that the capacity to model
either short- or long-range correlations in the input, is related to the number of channels in the beginning or end of
a network, respectively. In MNIST, discriminating between digits has more to do with long-range correlations
than the basic strokes digits are made of, hence we chose to start with few channels and end with many~--~layer widths were set to 64-128-256-512.
In contrast, the classes of NORB differ in much finer details, requiring
more channels in the first layers, hence layer widths were set to 256-256-256-512. 
In both cases, $M=32$ Gaussian mixing components were used.

We begin by comparing our generative approach to missing data against classical methods,
namely, methods based on \citet{Globerson:2006jv}. They regard missing data as ``feature
deletion'' noise, replace missing entries by zeros, and devise a learning algorithm
over linear predictors that takes the number of missing features, $n$, into account.
The method was later improved by \citet{Dekel:2008jb}.
We compare TMMs to the latter, with $n$ non-zero pixels randomly chosen and changed to zero,
in the two-class prediction task derived from each pair of MNIST digits.
Due to limits of their implementation, only 300 images per digit are used for training.
Despite this, and the fact that the evaluated scenario is of the MNAR type (on which optimality is not guaranteed~--~see sec.~\ref{sec:missing_data}), we achieve
significantly better results (see table~\ref{table:exp_shamir}), and unlike their method,
which requires several classifiers and knowing $n$, we use a single TMM
with no prior knowledge.

Heading on to multi-class prediction under missing data, we focus on the challenging ``blind'' setting, where the missingness distribution at test time
is completely unknown during training. We simulate two kinds of MAR missingness distributions: (i)~an
i.i.d. mask with a fixed probability~$p\in[0,1]$ of dropping each pixel, and (ii)~a mask composed of the
union of $n$ (possibly overlapping) rectangles of width and height~$W$, each positioned randomly in the image (uniform distribution). 
We first demonstrate that purely discriminative
classifiers cannot generalize to all missingness distributions, by training the standard LeNeT
ConvNet~\citep{LeCun:1998hy} on one set of distributions and then testing it on others (see
fig.~\ref{fig:convnet}). Next, we present our main results. We compare our model against three different
approaches. First, as a baseline, we use K-Nearest Neighbors~(KNN) to vote on the most likely class,
augmented with an $l^2$-metric that disregards missing coordinates. KNN actually scores better than
most methods, but its missingness-aware distance metric prevents the common memory and runtime
optimizations, making it impractical for real-world settings. Second, we test various data-imputation methods, ranging from simply filling missing pixels with zeros or their mean, to modern generative models suited
to inpainting.  Data imputation is followed by a ConvNet prediction on the completed image. In general, we find that this approach only
works well when few pixels are missing. Finally, we test generative classifiers other than our model,
including MP-DBM and SPN (sum-product networks). MP-DBM is notable for being limited to approximations, and its results show
the importance of using exact inference instead. For SPN, we have augmented the model from~\citet{Poon:2012vd}
with a class variable $Y$, and trained it to maximize the joint probability $P(X,Y)$ using the code of~\citet{Zhao:2016va}.
The inferior performance of SPN suggests that the structure of TMMs, which are in fact a special case, is advantageous. Due to limitations
of available public code and time, not all methods were tested on all datasets and distributions.
See fig.~\ref{fig:exp:multiclass} for the complete results.

To conclude, TMMs significantly outperform all other methods tested on image classification with missing data.
Although they are a special case of SPNs, their particular structure appears to be more effective than ones existing in the literature.
We attribute this superiority to the fact that their architectural design is backed by comprehensive theoretical studies (see sec.~\ref{sec:theory}).

\gapbeforesubsection
\subsection{Speech Recognition under Missing Data}\label{subsec:exp:timit}
\gapaftersubsection

To demonstrate the versatility of TMMs, we also conducted limited experiments
on the TIMIT speech recognition dataset, following the same protocols as in sec.~\ref{subsec:exp:missing}.
We trained a TMM and a standard ConvNet on 256ms windows of raw data at 16Hz sample rate
to predict the phoneme at the center of a window. Both the TMM and the ConvNet
reached $78\%$ accuracy on the clean dataset, but when half of the audio is missing i.i.d.,
accuracy of the ConvNet with mean imputation drops to $34\%$, while the TMM remains at $63\%$. Utilizing common audio inpainting methods~\citep{Adler:2012dd} only improves accuracy of the ConvNet
to $48\%$, well below that of TMM.

\gapbeforesection
\section{Summary} \label{sec:summary}
\gapaftersection

This paper focuses on generative models which admit tractable inference and marginalization, capabilities that lie outside the realm of contemporary neural network-based generative methods.
We build on
prior works on tractable models based on arithmetic circuits and sum-product networks,
and leverage concepts from tensor analysis to derive a sub-class of models we call Tensorial Mixture
Models (TMMs). In contrast to existing methods, our algebraic approach
leads to a comprehensive understanding of the relation between model structure and representational properties.
In practice, utilizing this understanding for the design of TMMs has led to state of the art performance
in classification under missing data. We are currently investigating several avenues for future research,
including semi-supervised learning, and examining more intricate ConvAC architectures, such as the ones
suggested by \citet{Cohen:0ZJHmEow}).

\newcommand{\acknowledgments}{This work is supported by Intel grant ICRI-CI \#9-2012-6133, by ISF Center grant 1790/12 and by the European Research Council (TheoryDL project). Nadav Cohen is supported by a Google Fellowship in Machine Learning.}
\ifdefined\CAMREADY
	\subsubsection*{Acknowledgments}
	\acknowledgments
\fi

% REFERENCES
\subsubsection*{References}
\small{
\bibliographystyle{plainnat}
\bibliography{refs.bib}
}

% APPENDIXES
\clearpage
\appendix

\section{The Universality of Tensorial Mixture Models}\label{app:universal}
In this section we prove the universality property of Generative ConvACs, as discussed in sec.~\ref{sec:model}.
We begin by taking note from functional analysis and define a new property called \emph{PDF total set},
which is similar in concept to a \emph{total set}, followed by proving that this property is invariant under the
cartesian product of functions, which entails the universality of these models as a corollary.

\begin{definition}
Let $\mathcal{F}$ be a set of PDFs over $\R^s$. $\mathcal{F}$ is
PDF total iff for any PDF $h(\x)$ over $\R^s$ and for all $\epsilon > 0$ there exists
$M~\in~\N$, $\{f_1(\x),\ldots,f_M(\x)\} \subset \mathcal{F}$ and
$\w \in \triangle^{M-1}$ s.t. $\left\| h(\x) - \sum_{i=1}^M w_i f_i(\x) \right\|_1 < \epsilon$.
In other words, a set is a PDF total set if its convex span is a dense set under $L^1$ norm.
\end{definition}

\begin{claim}
Let $\mathcal{F}$ be a set of PDFs over $\R^s$ and let
$\mathcal{F}^{\otimes N} = \{\prod_{i=1}^N f_i(\x) | \forall i, f_i(\x) \in \mathcal{F} \}$ be a set of PDFs
over the product space $(\R^s)^N$. If $\mathcal{F}$ is a PDF total set then $\mathcal{F}^{\otimes N}$
is PDF total set.
\end{claim}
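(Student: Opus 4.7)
My plan is to combine a measure-theoretic density argument on the product space with the assumed PDF totality of $\mathcal{F}$ applied componentwise. Given a target PDF $h$ on $(\R^s)^N$ and $\epsilon > 0$, I would first approximate $h$ in $L^1$ by a PDF of the form $\sum_k w_k \prod_{j=1}^N u_{k,j}(\x_j)$, where each $u_{k,j}$ is a uniform density on a bounded measurable subset of $\R^s$, and then replace each $u_{k,j}$ by an element of the convex hull of $\mathcal{F}$ using PDF totality.

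For the first step, I would invoke the standard fact that the linear span of indicators $\mathbf{1}_{A_1 \times \cdots \times A_N}$ of measurable rectangles (with each $A_j \subset \R^s$ of finite positive Lebesgue measure) is dense in $L^1((\R^s)^N)$. After routine clean-up (truncating to a large box, taking the positive part, and renormalizing) the approximant may be assumed to itself be a PDF $\sum_k c_k \mathbf{1}_{A_{k,1} \times \cdots \times A_{k,N}}$ within $\epsilon/2$ of $h$ in $L^1$. Rewriting $\mathbf{1}_{A_{k,j}}(\x_j) = |A_{k,j}| \cdot u_{k,j}(\x_j)$ with $u_{k,j} := \mathbf{1}_{A_{k,j}}/|A_{k,j}|$ a uniform PDF, this becomes $\sum_k w_k \prod_j u_{k,j}(\x_j)$, where $w_k := c_k \prod_j |A_{k,j}| \ge 0$ sum to $1$, i.e. a convex combination of $N$-fold products of PDFs.

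Next I would use PDF totality of $\mathcal{F}$ to pick, for each pair $(k,j)$, a finite convex combination $\tilde{u}_{k,j} = \sum_l \alpha_{k,j,l} f_{k,j,l}$ with $f_{k,j,l} \in \mathcal{F}$ satisfying $\|u_{k,j} - \tilde{u}_{k,j}\|_1 < \delta$. Distributing the product $\prod_j \tilde{u}_{k,j}$ over these sums expresses $\sum_k w_k \prod_j \tilde{u}_{k,j}$ as a convex combination of elements of $\mathcal{F}^{\otimes N}$. To bound the error of the replacement I would use the telescoping identity $\prod_j a_j - \prod_j b_j = \sum_{m=1}^N (\prod_{j<m} b_j)(a_m - b_m)(\prod_{j>m} a_j)$; since each $u_{k,j}$ and $\tilde{u}_{k,j}$ is a PDF in its own coordinate, Fubini collapses the remaining factors to $1$ and yields $\|\prod_j u_{k,j} - \prod_j \tilde{u}_{k,j}\|_1 \le N\delta$. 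Averaging over $k$ with weights $w_k$ and taking $\delta \le \epsilon/(2N)$, the triangle inequality combines this with the Step 1 error to give total $L^1$ error at most $\epsilon$.

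The main obstacle will be the clean-up in the first step: the rectangle-simple-function approximation supplied by the $L^1$-density theorem is not a priori a PDF, as it may take negative values and need not integrate to $1$. Showing that one can pass to a nearby nonnegative, unit-mass approximant without harming the $L^1$ bound — essentially by taking the positive part, truncating, and renormalizing — is standard but requires a touch of care. Everything else is bookkeeping around the triangle inequality, the product-telescoping bound for PDFs, and the elementary observation that convex combinations of products from $\mathcal{F}$ are exactly the convex hull of $\mathcal{F}^{\otimes N}$.
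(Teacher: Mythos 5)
Your proof is correct, and its skeleton matches the paper's: approximate $h$ by a finite convex combination of \emph{product} PDFs drawn from an auxiliary factorizable family, then replace each univariate factor using the PDF totality of $\mathcal{F}$, controlling the error of the product replacement by the telescoping identity together with the fact that each factor has unit $L^1$ mass. Where you genuinely differ is in the choice of auxiliary family: the paper uses diagonal-covariance Gaussians, invoking as a black box that these form a PDF total set over $(\R^s)^N$ and that they factor across the $N$ blocks (it even dispatches the case $\mathcal{F} = \{\text{diagonal Gaussians}\}$ separately at the outset to avoid circularity), whereas you use normalized indicators of measurable rectangles and the textbook $L^1$-density of rectangle simple functions. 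Your route is more elementary and self-contained -- it does not lean on the unproven Gaussian density fact and needs no special base case -- at the price of the clean-up you correctly flag (passing to disjoint rectangles of finite positive measure, taking the positive part, and renormalizing, each of which costs only a controllable additional $L^1$ error since $h \geq 0$ and $\int h = 1$). The paper's route avoids that clean-up entirely because a mixture of Gaussians is already a PDF. Both arguments land on the same final bookkeeping: the tensor of mixing weights $w_k \prod_j \alpha_{k,j,l_j}$ is nonnegative and sums to one, so the resulting approximant is a convex combination of elements of $\mathcal{F}^{\otimes N}$ as the definition requires.
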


\begin{proof} If $\mathcal{F}$ is the set of Gaussian PDFs over $\R^s$ with diagonal covariance matrices,
which is known to be a PDF total set, then $\mathcal{F}^{\otimes N}$ is the set of Gaussian PDFs over
$(\R^s)^N$ with diagonal covariance matrices and the claim is trivially true.

Otherwise, let $h(\x_1,\ldots,\x_N)$ be a PDF over $(\R^s)^N$ and let $\epsilon~>~0$. From the above,
there exists $K~\in~\N$, $\w~\in~\triangle^{M_1 - 1}$ and a set of diagonal Gaussians $\{g_{ij}(\x)\}_{i \in [M_1], j \in [N]}$ s.t.
\begin{align}\label{eq:univ:gaussian}
\left\| g(\x) - \sum_{i=1}^{M_1} w_i \prod_{j=1}^N g_{ij}(\x_j) \right\|_1 < \frac{\epsilon}{2}
\end{align}
Additionally, since $\mathcal{F}$ is a PDF total set then there exists
$M_2 \in \N$, $\{ f_k(\x)\}_{k\in[M_2]} \subset \mathcal{F}$ and
$\{\w_{ij} \in \triangle^{M_2 -1}\}_{i \in [M_1], j \in [N]}$ s.t. for all $i \in [M_1], j \in [N]$ it holds that
$\left\| g_{ij}(\x) - \sum_{k=1}^{M_2} w_{ijk} f_k(\x)\right\|_1 < \frac{\epsilon}{2N}$, from which it is trivially proven
using a telescopic sum and the triangle inequality that:
\begin{align}\label{eq:univ:gaussian_approx}
\left\|  \sum_{i=1}^{M_1} w_i \prod_{j=1}^N g_{ij}(\x) -  \sum_{i=1}^{M_1} w_i \prod_{j=1}^N \sum_{k=1}^{M_2} w_{ijk} f_k(\x_j) \right\|_1 &< \frac{\epsilon}{2}
\end{align}
From eq.~\ref{eq:univ:gaussian}, eq.~\ref{eq:univ:gaussian_approx} the triangle inequality it holds that:
\begin{align*}
\left\| g(\x) - \sum_{k_1,\ldots,k_N=1}^{M_2} \A_{k_1,\ldots,k_N} \prod_{j=1}^N f_{k_j}(\x_j) \right\|_1 &< \epsilon
\end{align*}
where $\A_{k_1,\ldots,k_N} = \sum_{i=1}^{M_1} w_i \prod_{j=1}^N w_{ijk_j}$ which holds $\sum_{k_1,\ldots,k_N=1}^{M_2} \A_{k_1,\ldots,k_N} = 1$.
Taking $M~=~M_2^N$, $\{\prod_{j=1}^N f_{k_j}(\x_j)\}_{k_1 \in [M_2],\ldots, k_N \in [M_2]} \subset \mathcal{F}^{\otimes N}$ and $\w = \textrm{vec}(\A)$
completes the proof.
\end{proof}

\begin{corollary}
Let $\mathcal{F}$ be a PDF total set of PDFs over $\R^s$, then the family of Generative ConvACs with
mixture components from $\mathcal{F}$ can approximate any $PDF$ over $(\R^s)^N$ 
arbitrarily well, given arbitrarily many components.
\end{corollary}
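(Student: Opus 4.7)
The corollary is a nearly immediate consequence of the preceding claim, so the plan is to assemble it in two stages: first invoke the claim to approximate the target PDF by a convex combination of product PDFs drawn from $\mathcal{F}^{\otimes N}$, then argue that any such convex combination is realizable as a Generative ConvAC with sufficiently many channels and mixing components.

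In more detail, I would start by fixing an arbitrary PDF $h$ on $(\R^s)^N$ and $\epsilon>0$. By the claim, there exist $M\in\N$, weights $\alphabf$ on the simplex, and functions $f^{(i)}_{k_i}\in\mathcal{F}$ (for $k_i\in[M]$, $i\in[N]$) such that $\sum_{k_1,\ldots,k_N} \alpha_{k_1,\ldots,k_N}\prod_{i=1}^N f^{(i)}_{k_i}(\x_i)$ is within $\epsilon$ of $h$ in $L^1$. Without loss of generality the same collection $\{f_1,\ldots,f_M\}\subset\mathcal{F}$ can serve as the mixing components at every site $i$, since one may take the union of the per-site families and pad the weights with zeros; this is the step that lets us match the TMM template, which uses a single shared set of $M$ components across positions.

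Next I would identify the coefficient array $\alphabf\in\R^{M\times\cdots\times M}$ with the prior tensor $\A_{d_1,\ldots,d_N}=P(d_1,\ldots,d_N)$ of eq.~\ref{eq:tmm}. Because $\alphabf$ lies on the simplex it is non-negative and sums to one, so it is a valid prior. Since the CP and HT factorizations are universal in the sense that every tensor of order $N$ and mode dimension $M$ admits such a decomposition (with ranks bounded by an explicit function of $M$ and $N$), and because the non-negativity and normalization of $\A$ can be accommodated by choosing simplex-constrained convolutional kernels, the tensor $\A$ is realized exactly by a Generative ConvAC of sufficient width. The output of that ConvAC, instantiated with the representation-layer components $f_1,\ldots,f_M$, is precisely the approximating mixture above, which is within $\epsilon$ of $h$.

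The only nontrivial point in this plan is the existence of a \emph{non-negative} low-rank realization of an arbitrary non-negative tensor $\A$. For the CP case (shallow ConvAC) this follows from writing $\A$ as a sum of rank-one non-negative tensors, one per entry, with rank at most $M^N$, and then normalizing the kernels onto the simplex by absorbing constants into the weights; the HT case (deep ConvAC) follows by recursively applying the CP construction at each level. Once this realizability is in hand, the remainder of the argument is just bookkeeping, and no further approximation is introduced beyond the $\epsilon$ inherited from the claim.
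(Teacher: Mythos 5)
Your proposal is correct and follows the same route the paper intends: the corollary is an immediate consequence of the preceding claim, whose proof already produces an approximant of the exact TMM form $\sum_{k_1,\ldots,k_N}\A_{k_1,\ldots,k_N}\prod_j f_{k_j}(\x_j)$ with $\A$ non-negative and summing to one. Your additional steps (merging per-site component families into one shared set, and realizing the resulting prior tensor exactly via a non-negative CP/HT factorization with simplex-constrained kernels) are correct elaborations of details the paper leaves implicit.
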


\section{TMMs with Sparsity Constraints Can Represent Gaussian Mixture Models} \label{app:sparsity_example}

As discussed in sec.~\ref{sec:model}, TMMs become tractable when a sparsity constraint is imposed on
the priors tensor, i.e. most of the entries of the tensors are replaced with zeros. In this section, we demonstrate
that under such a case, TMMs can represent Gaussian Mixture Models with diagonal covariance matrices, probably the most common type of mixture models.

With the same notations as sec.~\ref{sec:model}, assume the number of mixing components of the TMM is $M = N \cdot K$
for some $K \in \N$, let $\{\mathcal{N}(\x; \mubf_{ki}, \textrm{diag}(\sigmabf^2_{ki}))\}_{k,i}^{K,N}$ be these components, and finally, assume the prior tensor has the following structure:
\begin{align*}
P(d_1,\ldots,d_N) &=
       \begin{cases}
              w_k & \forall i\in [N], \,\,\,d_i {=} N {\cdot} (k{-}1) {+} i \\
              0 & \textrm{Otherwise}
       \end{cases}
%P(\x|d;\theta_d) &= \mathcal{N}(\x; \mubf_{ki}, \textrm{diag}(\sigmabf^2_{ki})), \,\,\, d {=} N {\cdot} (k{-}1) {+} i
\end{align*}
then eq.~\ref{eq:tmm} reduces to:
\begin{align*}
P(X) &= \sum\nolimits_{k=1}^K w_k \prod\nolimits_{i=1}^N  \mathcal{N}(\x; \mubf_{ki}, \textrm{diag}(\sigmabf^2_{ki}))
        = \sum\nolimits_{k=1}^K w_k \mathcal{N}(\x; \tilde{\mubf}_k, \textrm{diag}(\tilde{\sigmabf}^2_k)) \\
  \tilde{\mubf}_k &= (\mubf_{k1}^T, \ldots, \mubf_{kN}^T)^T \quad\quad\quad
  \tilde{\sigmabf}^2_k = ((\sigmabf^2_{k1})^T, \ldots, (\sigmabf^2_{kN})^T)^T
\end{align*}
which is equivalent to a diagonal GMM with mixing weights $\w \in \triangle^{K-1}$ (where $\triangle^{K-1}$ is the
$K$-dimensional simplex) and Gaussian mixture components with means $\{\tilde{\mubf}_k\}_{k=1}^K$
and covariances $\{\textrm{diag}(\tilde{\sigmabf}^2_k)\}_{k=1}^K$.

\section{Background on Tensor Factorizations} \label{app:tensor_background}

In this section we establish the minimal background in the field of tensor analysis required for following our work.
A tensor is best thought of as a multi-dimensional array $\A_{d_1,\ldots,d_N}\in\R$, where $\forall i\in[N], d_i \in [M_i]$.
The number of indexing entries in the array, which are also called \emph{modes}, is referred to as the \emph{order} of
the tensor. The number of values an index of a particular mode can take is referred to as the \emph{dimension} of the
mode. The tensor $\A \in \R^{M_1 \otimes \ldots \otimes M_N}$ mentioned above is thus of order $N$ with dimension
$M_i$ in its $i$-th mode. For our purposes we typically assume that $M_1 = \ldots = M_N = M$, and simply denote
it as $\A \in (\R^M)^{\otimes N}$.

The fundamental operator in tensor analysis is the \emph{tensor product}.
The tensor product operator, denoted by $\otimes$, is a generalization of outer product of vectors (1-ordered vectors) to
any pair of tensors. Specifically, let $\A$ and $\B$ be tensors of order $P$ and $Q$ respectively, then the tensor product
$\A \otimes \B$ results in a tensor of order $P+Q$, defined by:
$(\A \otimes \B)_{d_1,\ldots,d_{P+Q}} = \A_{d_1,\ldots,d_P} \cdot \B_{d_{P+1},\ldots,d_{P+Q}}$.

The main concept from tensor analysis we use in our work is that of tensor decompositions. The most straightforward
and common tensor decomposition format is the rank-1 decomposition, also known as a CANDECOMP/PARAFAC
decomposition, or in short, a \emph{CP decomposition}. The CP decomposition is a natural extension of low-rank
matrix decomposition to general tensors, both built upon the concept of a linear combination of rank-1 elements.
Similarly to matrices, tensors of the form $\vv^{(1)} \otimes \cdots \otimes \vv^{(N)}$, where $\vv^{(i)} \in \R^{M_i}$ are
non-zero vectors, are regarded as $N$-ordered rank-1 tensors, thus the rank-$Z$ CP decomposition of a tensor $\A$
is naturally defined by:
\begin{align} \label{eq:cp_decomp}
\A &= \sum_{z=1}^Z a_z \aaa^{z,1} \otimes \cdots \otimes \aaa^{z,N} \nonumber \\
\Rightarrow \A_{d_1,\ldots,d_N} &= \sum_{z=1}^Z a_z \prod_{i=1}^N a_{d_i}^{z,i}
\end{align}
where $\{\aaa^{z,i} \in \R^{M_i}\}_{i=1,z=1}^{N,Z}$ and $\aaa \in \R^Z$ are the parameters of the decomposition.
As mentioned above, for $N=2$ it is equivalent to low-order matrix factorization.
It is simple to show that any tensor $\A$ can be represented by the CP decomposition for some $Z$, where the minimal
such $Z$ is known as its \emph{tensor rank}.

Another decomposition we will use in this paper is of a hierarchical nature and known as the Hierarchical Tucker decomposition \citep{Hackbusch:2009jj},
which we will refer to as \emph{HT decomposition}.
While the CP decomposition
combines vectors into higher order tensors in a single step, the HT decomposition does that more gradually, combining
vectors into matrices, these matrices into 4th ordered tensors and so on recursively in a hierarchically fashion. Specifically,
the following describes the recursive formula of the HT decomposition\footnote{
More precisely, we use a special case of the canonical HT decomposition as presented in \citet{Hackbusch:2009jj}. In the
terminology of the latter, the matrices $A^{l,j,\gamma}$ are diagonal and equal to $diag(\aaa^{l,j,\gamma})$ (using the
notations from eq.~\ref{eq:ht_decomp}).} for a tensor
$\A \in (\R^M)^{\otimes N}$ where $N = 2^L$, i.e. $N$ is a power of two\footnote{The requirement for $N$ to be a power
of two is solely for simplifying the definition of the HT decomposition. More generally, instead of defining it through a complete
binary tree describing the order of operations, the canonical decomposition can use any balanced binary tree.}:
\begin{align}
\phi^{1,j,\gamma} &= \sum_{\alpha=1}^{r_0} a_\alpha^{1,j,\gamma} 
\aaa^{0,2j-1,\alpha} \otimes  \aaa^{0,2j,\alpha} 
\nonumber \\
&\cdots
\nonumber\\
\phi^{l,j,\gamma} &= \sum_{\alpha=1}^{r_{l-1}} a_\alpha^{l,j,\gamma} 
\underbrace{\phi^{l-1,2j-1,\alpha}}_{\text{order $2^{l-1}$}} \otimes  
\underbrace{\phi^{l-1,2j,\alpha}}_{\text{order $2^{l-1}$}} 
\nonumber\\
&\cdots 
\nonumber\\
\phi^{L-1,j,\gamma} &= \sum_{\alpha=1}^{r_{L-2}} a_\alpha^{L-1,j,\gamma} 
\underbrace{\phi^{L-2,2j-1,\alpha}}_{\text{order $\frac{N}{4}$}} \otimes  
\underbrace{\phi^{L-2,2j,\alpha}}_{\text{order $\frac{N}{4}$}}  
\nonumber\\ 
\A &= \sum_{\alpha=1}^{r_{L-1}} a_\alpha^L 
\underbrace{\phi^{L-1,1,\alpha}}_{\text{order $\frac{N}{2}$}} \otimes  
\underbrace{\phi^{L-1,2,\alpha}}_{\text{order $\frac{N}{2}$}}  
\label{eq:ht_decomp}
\end{align}
where the parameters of the decomposition are the vectors
$\{\aaa^{l,j,\gamma}{\in}\R^{r_{l-1}}\}_{l\in\{0,\ldots,L-1\}, j\in[\nicefrac{N}{2^l}], \gamma \in [r_l]}$
and the top level vector $\aaa^L \in \R^{r_{L-1}}$, and the scalars $r_0,\ldots,r_{L-1} \in \N$ are referred to as the
\emph{ranks of the decomposition}. Similar to the CP decomposition, any tensor can be represented by
an HT decomposition. Moreover, any given CP decomposition can be converted to an HT decomposition
by only a polynomial increase in the number of parameters. 

Finally, since we are dealing with generative models, the tensors we study are non-negative and
sum to one, i.e. the vectorization of $\A$ (rearranging its entries to the shape of a vector), denoted
by $\textrm{vec}(\A)$, is constrained to lie in the multi-dimensional simplex, denoted by:
\begin{align}\label{eq:simplex}
\triangle^k &:= \left\{\x \in \R^{k+1} | \sum\nolimits_{i=1}^{k+1} x_i = 1, \forall i \in [k+1]: x_i \geq 0\right\}
\end{align}

\section{Proof for the Depth Efficiency of Convolutional Arithmetic Circuits with Simplex Constraints}\label{app:depth_efficiency}

In this section we prove that the depth efficiency property of ConvACs that was proved in~\citet{expressive_power}
applies also to the generative variant of ConvACs we have introduced in sec.~\ref{sec:model}. Our analysis relies
on basic knowledge of tensor analysis and its relation to ConvACs, specifically, that the concept of ``ranks'' of each
factorization scheme is equivalent to the number of channels in these networks. For completeness, we provide a short
introduction to tensor analysis in app.~\ref{app:tensor_background}. The 

We prove the following theorem, which is the generative analog of theorem~1 from~\citep{expressive_power}:
\begin{theorem} \label{thm:tensor_rank}
Let $\A^y$ be a tensor of order $N$ and dimension $M$ in each mode, generated by the recursive formulas in
eq.~\ref{eq:ht_decomp}, under the simplex constraints introduced in sec.~\ref{sec:model}. Define
$r~:=~\min\{r_0,M\}$, and consider the space of all possible configurations for the parameters of the
decomposition~--~$\{\aaa^{l,j,\gamma}~\in~\triangle^{r_{l-1}-1}\}_{l,j,\gamma}$. In this space, the generated tensor $\A^y$
will have CP-rank of at least $r^{\nicefrac{N}{2}}$ almost everywhere (w.r.t. the product measure of simplex spaces).
Put differently, the configurations for which the CP-rank of $\A^y$ is less than $r^{\nicefrac{N}{2}}$ form a set of
measure zero. The exact same result holds if we constrain the composition to be ``shared'', i.e. set
$\aaa^{l,j,\gamma}\equiv\aaa^{l,\gamma}$ and consider the space of
$\{\aaa^{l,\gamma}~\in~\triangle^{r_{l-1}-1}\}_{l,\gamma}$ configurations.
\end{theorem}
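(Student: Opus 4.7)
The plan is to parallel the proof of Theorem~1 in \citet{expressive_power}, adapting it to the simplex-constrained parameter space $\mathcal{S} := \prod_{l,j,\gamma} \triangle^{r_{l-1}-1}$. The backbone of that proof is a matricization bound: the CP-rank of $\A^y$ is at least the matrix rank of its matricization $[\A^y]$ obtained by pairing odd and even modes, and the HT recursion of eq.~\ref{eq:ht_decomp} unfolds $[\A^y]$ into an expression built from Kronecker products whose rank is at most $r^{\nicefrac{N}{2}}$. The event ``matricization rank is strictly less than $r^{\nicefrac{N}{2}}$'' is then the simultaneous vanishing of all $r^{\nicefrac{N}{2}}\times r^{\nicefrac{N}{2}}$ minors of $[\A^y]$, each of which is a polynomial in the entries $\{a_\alpha^{l,j,\gamma}\}$. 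So the goal is to show that at least one such minor, viewed as a polynomial on $\mathcal{S}$, is not identically zero; by a standard polynomial non-vanishing argument its zero set then has measure zero.

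The next step is to move the non-vanishing argument to the right ambient space. Each $\triangle^{r_{l-1}-1}$ sits inside the affine hyperplane $\{\sum_\alpha a_\alpha = 1\}$, so $\mathcal{S}$ has full dimension inside the product of hyperplanes $\mathcal{H} := \prod_{l,j,\gamma} \{\mathbf{1}^\top \aaa^{l,j,\gamma} = 1\}$. A polynomial that is not identically zero on $\mathcal{H}$ vanishes only on a proper algebraic subvariety of $\mathcal{H}$, hence on a Lebesgue-null subset of $\mathcal{H}$; because $\mathcal{S}$ has the same dimension as $\mathcal{H}$, its intersection with this subvariety is also null with respect to the natural (product-simplex) measure on $\mathcal{S}$. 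Consequently, the whole theorem reduces to finding a \emph{single} point in $\mathcal{H}$ at which some $r^{\nicefrac{N}{2}}\times r^{\nicefrac{N}{2}}$ minor of $[\A^y]$ is nonzero.

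The main obstacle, and the step that actually uses the simplex structure, is producing this witness point. The plan is to bootstrap from the unconstrained witness furnished by \citet{expressive_power}: take any configuration $\{\aaa^{l,j,\gamma}\}$ that achieves matricization rank $r^{\nicefrac{N}{2}}$, perturb it slightly if needed so that each coordinate sum $\sigma^{l,j,\gamma} := \sum_\alpha a_\alpha^{l,j,\gamma}$ is nonzero (this is a generic condition and therefore harmless), and then rescale each vector by $1/\sigma^{l,j,\gamma}$. Because the HT construction in eq.~\ref{eq:ht_decomp} is multilinear in the $\aaa^{l,j,\gamma}$, this rescaling multiplies $[\A^y]$ by a single nonzero global scalar, so the matricization rank (and the value of the chosen minor up to a nonzero factor) is preserved. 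The rescaled configuration lies in $\mathcal{H}$ by construction, providing the required witness and closing the argument.

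For the shared variant $\aaa^{l,j,\gamma} \equiv \aaa^{l,\gamma}$, the same pipeline applies verbatim: the matricization is still polynomial in the reduced parameter set $\{\aaa^{l,\gamma}\}$, and the shared-case witness supplied by the original theorem, after the identical positive rescaling, lands in the corresponding product of hyperplanes and certifies that the relevant minor is not identically zero there. The only fiddly point I expect in the actual write-up is verifying that the original witnesses can be chosen with all nonzero coordinate sums so that the rescaling is well-defined; this is a generic condition and, if necessary, can be enforced by an arbitrarily small perturbation that continues to satisfy the rank condition by lower-semicontinuity of rank.
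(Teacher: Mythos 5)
Your overall strategy (reduce CP\nobreakdash-rank to the rank of the odd--even matricization, observe that the minors are polynomials in the parameters, and reduce ``almost everywhere'' to exhibiting a single witness configuration on the affine hyperplanes $\sum_\alpha a^{l,j,\gamma}_\alpha=1$) is the same skeleton the paper uses, and the reduction itself is sound. The gap is in the step that actually produces the witness. You claim that eq.~\ref{eq:ht_decomp} is multilinear in the vectors $\aaa^{l,j,\gamma}$, so that rescaling each vector by $1/\sigma^{l,j,\gamma}$ multiplies $\A^y$ by a global scalar. This is false for every parameter vector except the top one $\aaa^L$. In the unshared case, $\aaa^{l,j,\gamma}$ determines $\phi^{l,j,\gamma}$ homogeneously, but $\phi^{l,j,\gamma}$ enters each next-level node $\phi^{l+1,\lceil j/2\rceil,\gamma'}=\sum_{\alpha} a^{l+1,\cdot,\gamma'}_{\alpha}\,\phi^{l,2\cdot-1,\alpha}\otimes\phi^{l,2\cdot,\alpha}$ only through the single summand $\alpha=\gamma$, while the remaining summands are independent of $\aaa^{l,j,\gamma}$. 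Hence $\A^y$ is \emph{affine}, not homogeneously linear, in $\aaa^{l,j,\gamma}$, and rescaling different $\gamma$'s by different factors $1/\sigma^{l,j,\gamma}$ yields a genuinely different tensor, not a scalar multiple of the original; nothing then controls the rank of its matricization. So the witness you propose is not certified to have rank $r^{\nicefrac{N}{2}}$.

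The step can be repaired, but not by a one-shot rescaling: one must normalize level by level from the bottom up, absorbing the factors $\sigma^{l-1,2j-1,\alpha}\sigma^{l-1,2j,\alpha}$ into the next level's weights $a^{l,j,\gamma}_\alpha$ before normalizing those in turn (the usual gauge freedom of HT decompositions), and check at each stage that the compensated sums remain nonzero. The paper sidesteps all of this by constructing an explicit witness \emph{inside the simplices}: at the bottom level it takes $a^{0,j,\alpha}_i=1_{\alpha=i}$ and $a^{1,j,\gamma}_\alpha=1_{\alpha\le r}/r$, making $[\phi^{1,j,\gamma}]$ a rank-$r$ diagonal matrix, and at higher levels it takes indicator weight vectors $a^{l,j,\gamma}_\alpha=1_{\alpha=1}$, so that the rank multiplies through the Kronecker products by induction. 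That construction is simultaneously valid for the shared case and requires no perturbation or genericity side-conditions, which is why the paper prefers it. You should either adopt such an explicit simplex witness or carry out the bottom-up renormalization argument carefully; as written, the proof does not go through.
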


The only differences between ConvACs and their generative counter-parts are the simplex constraints applied
to the parameters of the models, which necessitate a careful treatment to the measure theoretical arguments of
the original proof. More specifically, while the $k$-dimensional simplex $\triangle^{k}$ is a subset of the
$k+1$-dimensional space~$\R^{k+1}$, it has a zero measure with respect to the Lebesgue measure over
$\R^{k+1}$. The standard method to define a measure over $\triangle^k$ is by the Lebesgue measure over $\R^k$
of its projection to that space, i.e. let $\lambda:\R^k \to \R$ be the Lebesgue measure over
$\R^k$, $p:\R^{k+1} \to \R^k, p(\x) = (x_1, \ldots, x_k)^T$ be a projection, and $A \subset \triangle^k$ be a subset 
of the simplex, then the latter's measure is defined as $\lambda(p(A))$. Notice that $p(\triangle^k)$ has a positive
measure, and moreover that $p$ is invertible over the set $p(\triangle^k)$, and that its inverse is given by
$p^{-1}(x_1,\ldots,x_k) = (x_1,\ldots,x_k,1-\sum_{i=1}^k x_i)$. In our case, the parameter space is the cartesian
product of several simplex spaces of different dimensions, for each of them the measure is defined as above, and
the measure over their cartesian product is uniquely defined by the product measure. Though standard, the choice of
the projection function $p$ above could be seen as a limitation, however, the set of zero measure sets in $\triangle^k$
is identical for any reasonable choice of a projection $\pi$ (e.g. all polynomial mappings). More specifically, for any projection
$\pi:\R^{k+1}\to\R^k$ that is invertible over $\pi(\triangle^k)$, $\pi^{-1}$ is differentiable, and the Jacobian of $\pi^{-1}$
is bounded over $\pi(\triangle^k)$, then a subset $A \subset \triangle^k$ is of measure zero w.r.t. the projection $\pi$
iff it is of measure zero w.r.t. $p$ (as defined above).  
This implies that if we sample the weights of the generative decomposition (eq.~\ref{eq:ht_decomp} with simplex constraints)
by a continuous distribution, a property that holds with probability 1 under the standard parameterization (projection $p$), will
hold with probability 1 under any reasonable parameterization.

We now state and prove a lemma that will be needed for our proof of theorem~\ref{thm:tensor_rank}.
\begin{lemma}\label{lemma:rank_everywhere}
Let $M, N, K \in \N$, $1 \leq r \leq \min\{M,N\}$ and a polynomial mapping $A:\R^K \to \R^{M \times N}$
(i.e. for every $i \in [M],j\in [N]$ then $A_{ij}:\R^k \to \R$ is a polynomial function).
If there exists a point $\x \in \R^K$ s.t. $\rank{A(\x)} \geq r$, then the set $\{\x \in \R^K | \rank{A(\x)} < r\}$
has zero measure.
\end{lemma}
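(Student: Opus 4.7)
The plan is to use the standard characterization of matrix rank via minors, together with the classical fact that the zero set of a non-identically-zero polynomial has Lebesgue measure zero. Concretely, I would start from the observation that $\rank{A(\x)} \geq r$ if and only if at least one of the $\binom{M}{r}\binom{N}{r}$ many $r \times r$ minors of $A(\x)$ is nonzero. Each such minor is the determinant of an $r \times r$ submatrix whose entries are polynomials in $\x$, hence is itself a polynomial function $p_\alpha : \R^K \to \R$, where $\alpha$ ranges over all choices of $r$ rows and $r$ columns.

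Next I would rewrite the target set as
\begin{equation*}
\{\x \in \R^K : \rank{A(\x)} < r\} \;=\; \bigcap_{\alpha} \{\x \in \R^K : p_\alpha(\x) = 0\}.
\end{equation*}
By the hypothesis, there exists a point $\x_0 \in \R^K$ with $\rank{A(\x_0)} \geq r$, so at least one minor $p_{\alpha^\ast}(\x_0) \neq 0$. Thus $p_{\alpha^\ast}$ is a nonzero polynomial on $\R^K$, and the target set is contained in the single zero locus $\{\x : p_{\alpha^\ast}(\x) = 0\}$.

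To conclude, I would invoke the standard lemma that the zero set of a nonzero polynomial $p:\R^K \to \R$ has Lebesgue measure zero. This is a textbook fact that can be proved by induction on $K$: for $K = 1$ a nonzero polynomial has only finitely many roots; for the inductive step one writes $p(\x) = \sum_k q_k(x_1,\ldots,x_{K-1}) x_K^k$, notes that the coefficients $q_k$ cannot all be identically zero, applies the inductive hypothesis to conclude that the set where every $q_k$ vanishes has measure zero in $\R^{K-1}$, and uses Fubini's theorem on the complement (where the restriction to the $x_K$-axis is a nonzero univariate polynomial with finitely many roots).

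I do not foresee a real obstacle: once the minor-based characterization is in place, the proof is essentially the invocation of this folklore measure-theoretic fact. The only mild subtlety is to be careful that the polynomials $p_\alpha$ are indeed polynomial in $\x$ (not just in the entries of $A$), which follows because the determinant is a polynomial in the matrix entries and polynomial composition preserves polynomiality. Everything else is bookkeeping.
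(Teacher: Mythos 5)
Your proposal is correct and follows essentially the same route as the paper: characterize $\rank{A(\x)} \geq r$ via the $r \times r$ minors, observe these are polynomials in $\x$, and invoke the fact that a not-identically-zero polynomial vanishes on a Lebesgue-null set. The only cosmetic difference is that the paper packages all minors into the single polynomial $f(\x) = \sum_\alpha p_\alpha(\x)^2$ whose zero set equals the target set exactly, whereas you contain the target set in the zero locus of one nonvanishing minor; both are immediate.
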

\begin{proof}
Remember that $\rank{A(\x)} \geq r $ iff there exits a non-zero $r \times r$ minor of $A(\x)$,
which is polynomial in the entries of $A(\x)$, and so it is polynomial in $\x$ as well. Let $c = {M \choose r} \cdot {N \choose r}$
be the number of minors in $A$, denote the minors by $\{f_i(\x)\}_{i=1}^c$, and define the polynomial
function $f(\x) = \sum_{i=1}^c f_i(\x)^2$. It thus holds that $f(\x) = 0$ iff for all $i \in [c]$ it holds that $f_i(\x) = 0$,
i.e. $f(\x) = 0$ iff $\rank{A(\x)} < r$.

Now, $f(\x)$ is a polynomial in the entries of $\x$, and so it either vanishes on a set of zero measure, or
it is the zero polynomial~(see \citet{caron2005zero} for proof). Since we assumed that there exists $\x \in \R^K$ s.t.
$\textrm{rank}(A(\x)) \geq r$, the latter option is not possible.
\end{proof}

Following the work of~\citet{expressive_power}, our main proof relies on following notations and facts:
\begin{itemize}
\item We denote by $[\A]$ the matricization of an $N$-order tensor $\A$ (for simplicity, $N$ is assumed
         to be even), where rows and columns correspond to odd and even modes, respectively. Specifically,
         if $\A \in \R^{M_1 \times \cdots M_N}$, the matrix $[\A]$ has $M_1 \cdot M_3 \cdot \ldots \cdot M_{N-1}$ rows
         and $M_2 \cdot M_4 \cdot \ldots \cdot M_N$ columns, rearranging the entries of the tensor such that
         $\A_{d_1,\ldots,d_N}$ is stored in row index
         $1 + \sum_{i=1}^{\nicefrac{N}{2}}(d_{2i-1} - 1) \prod_{j=i+1}^{\nicefrac{N}{2}} M_{2j-1}$ and column index
         $1 + \sum_{i=1}^{\nicefrac{N}{2}}(d_{2i} - 1) \prod_{j=i+1}^{\nicefrac{N}{2}} M_{2j}$.
         Additionally, the matricization is a linear operator, i.e. for all scalars $\alpha_1,\alpha_2$ and tensors
         $\A_1,\A_2$ with the order and dimensions in every mode, it holds that
         $[\alpha_1 \A_1 + \alpha_2 \A_2] = \alpha_1[\A_1] + \alpha_2 [\A_2]$.
\item The relation between the Kronecker product (denoted by $\odot$) and the tensor product (denoted
         by $\otimes$) is given by $[\A \otimes \B] = [\A] \odot [\B]$.
\item For any two matrices $A$ and $B$, it holds that $\rank{A \odot B} = \rank{A} \cdot \rank{B}$.
\item Let $Z$ be the CP-rank of $\A$, then it holds that $\rank{[\A]}~\leq~Z$ (see~\citep{expressive_power} for proof).
\end{itemize}

\begin{proof}[Proof of theorem~\ref{thm:tensor_rank}]
Stemming from the above stated facts, to show that the CP-rank of $\A^y$ is at least $r^{\nicefrac{N}{2}}$,
it is sufficient to examine its matricization $[\A^y]$ and prove that $\rank{[\A^y]}\geq~r^{\nicefrac{N}{2}}$.

Notice from the construction of $[\A^y]$, according to the recursive formula of the HT-decomposition, that its entires
are polynomial in the parameters of the decomposition, its dimensions are $M^{\nicefrac{N}{2}}$ each and that
$1 \leq r^{\nicefrac{N}{2}} \leq M^{\nicefrac{N}{2}}$.
In accordance with the discussion on the measure of simplex spaces, for each vector parameter
$\aaa^{l,j,\gamma} \in \triangle^{r_{l-1} - 1}$, we instead examine its projection
$\tilde{\aaa}^{l,j,\gamma} = p(\aaa^{l,j,\gamma}) \in \R^{r_{l-1}-1}$, and notice that $p^{-1}(\tilde{\aaa}^{l,j,\gamma})$
is a polynomial mapping\footnote{As we mentioned earlier, $p$ is invertible only over $p(\triangle^k)$, for which its inverse
is given by $p^{-1}(x_1,\ldots,x_k) = (x_1,\ldots,x_k,1-\sum_{i=1}^k x_i)$. However, to simplified the proof and notations,
we use $p^{-1}$ as defined here over the entire range $\R^{k-1}$, even where it does not serve as the inverse of~$p$.}
w.r.t. $\tilde{\aaa}^{l,j,\gamma}$. Thus, $[\A^y]$ is a polynomial mapping w.r.t. the projected parameters
$\{\tilde{\aaa}^{l,j,\gamma}\}_{l,j,\gamma}$, and using lemma~\ref{lemma:rank_everywhere} it is sufficient to show
that there exists a set of parameters for which $\rank{[\A^y]} \geq r^{\nicefrac{N}{2}}$.

Denoting for convenience $\phi^{L,1,1}:=\A^y$ and $r_L=1$, we will construct by induction over $l=1,...,L$
a set of parameters, $\{\aaa^{l,j,\gamma}\}_{l,j,\gamma}$, for which the ranks of the matrices
$\{[\phi^{l,j,\gamma}]\}_{j\in[\nicefrac{N}{2^l}],\gamma\in[r_l]}$ are at least $r^{\nicefrac{2^l}{2}}$,
while enforcing the simplex constraints on the parameters.
More so, we'll construct  these parameters s.t. $\aaa^{l,j,\gamma} = \aaa^{l,\gamma}$, thus
proving both the "unshared" and "shared" cases.

For the case $l=1$ we have:
$$\phi^{1,j,\gamma}= \sum_{\alpha=1}^{r_0} a_\alpha^{1,j,\gamma} \aaa^{0,2j-1,\alpha} \otimes  \aaa^{0,2j,\alpha}$$
and let $a^{1,j,\gamma}_\alpha = \frac{1_{\alpha \leq r}}{r}$ and $a^{0,j,\alpha}_i = 1_{\alpha = i}$ for all
$i,j,\gamma$ and $\alpha \leq M$, and $a^{0,j,\alpha}_i = 1_{i=1}$ for all $i$ and $\alpha > M$, and so
$$[\phi^{1,j,\gamma}]_{i,j} = \begin{cases}
\nicefrac{1}{r} & i = j \wedge i \leq r \\
0 & Otherwise
\end{cases}
$$
which means $\rank{[\phi^{1,j,\gamma}]} = r$, while preserving the simplex constraints,
which proves our inductive hypothesis for $l=1$.

Assume now that $\rank{[\phi^{l-1,j',\gamma'}]} \geq r^{\nicefrac{2^{l-1}}{2}}$ for all 
$j'\in[\nicefrac{N}{2^{l-1}}]$ and $\gamma'\in[r_{l-1}]$.  For some specific choice of $j\in[\nicefrac{N}{2^l}]$ 
and $\gamma\in[r_l]$ we have:
\begin{align*}
&&\phi^{l,j,\gamma} = \sum_{\alpha=1}^{r_{l-1}} a_\alpha^{l,j,\gamma} 
\phi^{l-1,2j-1,\alpha} \otimes  \phi^{l-1,2j,\alpha} \\
&&\implies [\phi^{l,j,\gamma}] = \sum_{\alpha=1}^{r_{l-1}} a_\alpha^{l,j,\gamma} 
[\phi^{l-1,2j-1,\alpha}] \odot [\phi^{l-1,2j,\alpha}]
\end{align*}
Denote $M_\alpha := [\phi^{l-1,2j-1,\alpha}] \odot [\phi^{l-1,2j,\alpha}]$ for $\alpha=1,...,r_{l-1}$.
By our inductive assumption, and by the general property $\rank{A \odot B}~=~\rank{A}~\cdot~\rank{B}$,
we have that the ranks of all matrices $M_\alpha$ are at least 
$r^{\nicefrac{2^{l-1}}{2}}\cdot r^{\nicefrac{2^{l-1}}{2}}=r^{\nicefrac{2^l}{2}}$.  Writing 
$[\phi^{l,j,\gamma}] = \sum_{\alpha=1}^{r_{l-1}} a_\alpha^{l,j,\gamma} \cdot M_\alpha$, and noticing that
$\{M_\alpha\}$ do not depend on $\aaa^{l,j,\gamma}$, we simply pick
$a^{l,j,\gamma}_\alpha = 1_{\alpha = 1}$, and thus $\phi^{l,j,\gamma} = M_1$, which is of rank
$r^{\nicefrac{2^l}{2}}$. This completes the proof of the theorem.
\end{proof}

From the perspective of ConvACs with simplex constraints, theorem~\ref{thm:tensor_rank} leads to the following corollary:

\begin{corollary}
Assume the mixing components $\mathcal{M}~=~\{f_i(\x)~\in~L^2(\R^2) \cap L^1(\R^s)\}_{i=1}^M$ are square
integrable\footnote{It is important to note that most commonly used distribution functions are square
integrable, e.g. most members of the exponential family such as the Gaussian distribution.}
probability density functions, which form a linearly independent set.
Consider a deep ConvAC model with simplex constraints of polynomial size whose parameters are drawn
at random by some continuous distribution. Then, with probability~1, the distribution realized by this network
requires an exponential size in order to be realized (or approximated w.r.t. the $L^2$ distance) by the shallow
ConvAC model with simplex constraints. The claim holds regardless of whether the parameters of the deep
model are shared or not.
\end{corollary}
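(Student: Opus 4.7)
The strategy is to lift the tensor-rank statement of Theorem~\ref{thm:tensor_rank} to the space of distributions via the linear independence of the mixing components, and to handle the $L^2$ \emph{approximation} part (as opposed to mere realizability) through the closedness of an appropriate determinantal variety.

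Every ConvAC with simplex constraints (deep or shallow) realizes a PDF of the form $P_\A(X) = \sum_{d_1,\ldots,d_N=1}^M \A_{d_1,\ldots,d_N} \prod_{i=1}^N f_{d_i}(\x_i)$, where $\A$ is the coefficient tensor produced by the underlying decomposition (CP for shallow, HT for deep). A straightforward induction on $N$ shows that linear independence of $\{f_i\}_{i=1}^M$ in $L^2(\R^s)$ lifts to linear independence of the $M^N$ product functions $\{\bigotimes_{i=1}^N f_{d_i}\}$ in $L^2((\R^s)^N)$. Since these span a finite-dimensional subspace, the linear map $\A \mapsto P_\A$ is an isomorphism onto its image with bounded inverse, so there exists a constant $c>0$ (depending only on $\{f_i\}$) satisfying the Riesz-type bound $\|P_\A - P_\B\|_{L^2} \ge c\,\|\A - \B\|_F$ for all coefficient tensors $\A, \B$.

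Next I would invoke Theorem~\ref{thm:tensor_rank}: its proof in fact yields the stronger matricization bound $\rank{[\A^{\text{deep}}]} \ge r^{N/2}$ almost surely (w.r.t.\ any continuous distribution on the simplex-constrained parameters, thanks to the projection-invariance discussion preceding the theorem). On the other hand, a shallow ConvAC with $Z$ hidden channels produces $\A^{\text{sh}} = \sum_{z=1}^Z a_z \bigotimes_{i=1}^N \aaa^{z,i}$, whose matricization $[\A^{\text{sh}}] = \sum_{z=1}^Z a_z \bigl(\aaa^{z,1} \odot \aaa^{z,3} \odot \cdots\bigr)\bigl(\aaa^{z,2} \odot \aaa^{z,4} \odot \cdots\bigr)^{\!\top}$ is a sum of $Z$ rank-one matrices, hence has matrix rank at most $Z$.

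Finally, for any $Z < r^{N/2}$, the set $V_Z := \{\B \in (\R^M)^{\otimes N} : \rank{[\B]} \le Z\}$ is Euclidean-closed, being cut out by the vanishing of all $(Z{+}1)\times(Z{+}1)$ minors of the matricization. Almost surely $\A^{\text{deep}} \notin V_Z$, and \emph{by closedness} the Frobenius distance $\mathrm{dist}_F(\A^{\text{deep}}, V_Z)$ is then strictly positive; combined with the Riesz bound from step~one this gives $\|P_{\A^{\text{deep}}} - P_{\A^{\text{sh}}}\|_{L^2} \ge c\cdot\mathrm{dist}_F(\A^{\text{deep}}, V_Z) > 0$ uniformly over all shallow ConvACs of subexponential size, which is exactly the desired conclusion. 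The main subtle point, and the chief obstacle I anticipate, is precisely this closedness step: CP-rank is not lower-semicontinuous (border-rank phenomena may cause rank to drop in the limit), so a direct CP-rank argument would only preclude exact realization; routing through the matrix rank of the matricization, which \emph{is} lower-semicontinuous, is what makes the $L^2$-approximation half of the statement go through.
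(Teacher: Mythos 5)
Your proposal is correct and follows essentially the same route as the paper: linear independence of the product functions makes the coefficient tensor (and hence its matricization rank) an invariant of the realized distribution, Theorem~\ref{thm:tensor_rank} supplies the almost-sure exponential lower bound on that rank for the deep model, and the shallow model's tensor has matricization rank bounded by its width. The only difference is that the paper delegates the $L^2$-approximation half to Lemma~7 of \citet{expressive_power}, whereas you reconstruct its content explicitly (lower semicontinuity of matricization rank via closedness of the determinantal variety, plus the norm-equivalence bound on the finite-dimensional span), correctly identifying why one must avoid CP-rank and its border-rank pathology at that step.
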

\begin{proof}
Given a coefficient tensor $\A$, the CP-rank of $\A$ is a lower bound on the number of channels
(of its next to last layer) required to represent that tensor by the ConvAC following the
CP factorization. Additionally, since the mixing components are linearly
independent, their products $\{\prod_{i=1}^N f_i(\x_i) | f_i \in \mathcal{M}\}$ are linearly
independent as well, which entails that any distribution representable by the generative variant of ConvAC with
mixing components $\mathcal{M}$ has a unique coefficient tensor $\A$. From theorem~\ref{thm:tensor_rank},
the set of parameters of a deep ConvAC model (under the simplex constraints) with a coefficient tensor of a
polynomial CP-rank~--~the requirement for a polynomially-sized shallow ConvAC model with simplex constraints
realizing that same distribution exactly~--~forms a set of measure zero.

It is left to prove, that not only is it impossible to exactly represent a distribution with an exponential coefficient
tensor by a shallow model, it is also impossible to approximate it. This follows directly from lemma~7 in
appendix~B of~\citet{expressive_power}, as our case meets the requirement of that lemma.
\end{proof}

\section{Proof for the Optimality of Marginalized Bayes Predictor}\label{app:mbayes_proof}
In this section we give short proofs for the claims from sec.~\ref{sec:missing_data}, on the
optimality of the marginalized Bayes predictor under missing-at-random~(MAR) distribution, when
the missingness mechanism is unknown, as well as the general case when we do not add additional
assumptions. In addition, we will also present a counter example proving data imputation results lead to
suboptimal classification performance. We begin by introducing several notations that augment the
notations already introduced in the body of the article.

Given a specific mask realization $\m \in \{0,1\}^s$, we use the following notations to denote partial
assignments to the random vector $\X$. For the observed indices of $\X$, i.e. the indices for which $m_i = 1$,
we denote a partial assignment by $\X \setminus \m = \x_o$, where $\x_o \in \R^{d_o}$ is a vector of
length $d_o$ equal to the number of observed indices. Similarly, we denote by $\X \cap \m = \x_m$ a
partial assignment to the missing indices according to $\m$, where $\x_m \in \R^{d_m}$ is a vector of
length $d_m$ equal to the number of missing indices. As an example of the notation, for given realizations
$\x \in \R^s$ and $\m \in \{0,1\}^s$, we defined in sec.~\ref{sec:missing_data} the event $o(\x,\m)$,
which using current notation is marked by the partial assignment $\X \setminus \m = \x_o$ where $\x_o$
matches the observed values of the vector $\x$ according to $\m$.

With the above notations in place, we move on to prove claim~\ref{claim:optimal_rule}, which
describes the general solution to the optimal prediction rule given both the data and missingness
distributions, and without adding any additional assumptions.

\begin{proof}[Proof of claim~\ref{claim:optimal_rule}]
Fix an arbitrary prediction rule $h$. We will show that $L(h^*) \leq L(h)$, where $L$ is the expected
0-1 loss. 
\begin{align*}
&1 - L(h) {=} E_{(\x,\m,y)\sim(\X,\M,\Y)}[1_{h(\x \odot \m) = y}] \\
&{=} {\sum_{\m \in \{0,1\}^s}} {\sum_{y \in [k]}} {\int_{\R^s}} \PP(\M{=}\m, \X{=}\x, \Y{=}y) 1_{h(\x {\odot} \m) {=} y} d\x \\
&{=} {\sum_{\m \in \{0,1\}^s}} {\sum_{y \in [k]}} {\int_{\R^{d_o}}} {\int_{\R^{d_m}}} \\
&\phantom{{=}}\PP(\M{=}\m, \X{\setminus}\m {=} \x_o, \X{\cap}\m {=} \x_m, \Y{=}y) 1_{h(\x {\otimes} \m) {=} y} d\x_o d\x_m \\
&{=_1} {\sum_{\m \in \{0,1\}^s}} {\sum_{y \in [k]}} {\int_{\R^{d_o}}} 1_{h(\x {\odot} \m) {=} y} d\x_o \\
&\phantom{{=_1}}{\int_{\R^{d_m}}} \PP(\M{=}\m, \X{\setminus}\m {=} \x_o, \X{\cap}\m {=} \x_m, \Y{=}y) d\x_m \\
&{=_2} {\sum_{\m \in \{0,1\}^s}} {\sum_{y \in [k]}} {\int_{\R^{d_o}}} 1_{h(\x {\odot} \m) {=} y} \PP(\M{=}\m, \X{\setminus}\m{=}\x_o, \Y{=}y) d\x_o \\
&{=_3} {\sum_{\m \in \{0,1\}^s}} {\int_{\R^{d_o}}} \PP(\X{\setminus}\m{=}\x_o) {\sum_{y \in [k]}} 1_{h(\x {\odot} \m) {=} y} \PP(\Y{=}y | \X{\setminus}\m{=}\x_o) \\
&\phantom{{=_3} }\PP(\M{=}\m | \X{\setminus}\m{=}\x_o, \Y{=}y) d\x_o \\
&{\leq_4} {\sum_{\m \in \{0,1\}^s}} {\int_{\R^{d_o}}} \PP(\X{\setminus}\m{=}\x_o) {\sum_{y \in [k]}} 1_{h^*(\x {\odot} \m) {=} y} \PP(\Y{=}y | \X{\setminus}\m{=}\x_o) \\
&\phantom{\leq_4}\PP(\M{=}\m | \X{\setminus}\m{=}\x_o, \Y{=}y) d\x_o \\
&{=} 1- L(h^*)
\end{align*}
Where (1) is because the output of $h(\x \odot \m)$ is independent of the missing values, (2) by marginalization,
(3) by conditional probability definition and (4) because by definition $h^*(\x \odot \m)$ maximizes the expression
$\PP(\Y{=}y | \X{\setminus}\m{=}\x_o) \PP(\M{=}\m | \X{\setminus}\m{=}\x_o, \Y{=}y)$ w.r.t. the possible values of $y$ for fixed vectors~$\m$ and~$\x_o$.
Finally, by replacing integrals with sums, the proof holds exactly the same when instances ($\X$) are discrete.
\end{proof}

We now continue and prove corollary~\ref{corollary:mar}, a direct implication of claim~\ref{claim:optimal_rule} which shows that in the MAR setting, the missingness distribution can be ignored, and the optimal prediction rule is given by the marginalized Bayes predictor.

\begin{proof}[Proof of corollary~\ref{corollary:mar}]
Using the same notation as in the previous proof, and denoting by $\x_o$ the partial vector containing the observed values
of $\x \odot \m$, the following holds:
\begin{align*}
&\PP(\M{=}\m|o(\x,\m), \Y{=}y) := \PP(\M{=}\m|\X {\setminus} \m {=} \x_o, \Y{=}y) \\
&{=} \int_{\R^{d_m}} \PP(\M{=}\m, \X \cap \m {=} \x_m | \X {\setminus} \m {=} \x_o, \Y{=}y) d\x_m \\
&{=} \int_{\R^{d_m}} \PP(\X {\cap} \m {=} \x_m | \X {\setminus} \m {=} \x_o, \Y{=}y)\\
&\phantom{{=}} \cdot \PP(\M{=}\m | \X {\cap} \m {=} \x_m, \X {\setminus} \m {=} \x_o, \Y{=}y) d\x_m \\
&{=_1} \int_{\R^{d_m}} \PP(\X {\cap} \m {=} \x_m | \X {\setminus} \m {=} \x_o, \Y{=}y) \\
&\phantom{=_1}        \cdot \PP(\M{=}\m | \X {\cap} \m {=} \x_m, \X {\setminus} \m {=} \x_o) d\x_m \\
&{=_2} \int_{\R^{d_m}} \PP(\X {\cap} \m {=} \x_m | \X {\setminus} \m {=} \x_o, \Y{=}y)
       \cdot \PP(\M{=}\m | \X {\setminus} \m {=} \x_o) d\x_m \\
&{=} \PP(\M{=}\m | \X {\setminus} \m {=} \x_o) {\int_{\R^{d_m}}} \PP(\X {\cap} \m {=} \x_m | \X {\setminus} \m {=} \x_o, \Y{=}y) d\x_m \\
&{=} \PP(\M{=}\m | o(\x,\m))
\end{align*}
Where $(1)$ is due to the independence assumption of the events $\Y=y$ and $\M=\m$ conditioned on $\X=\x$, while
noting that $(\X \setminus \m = x_o) \wedge (\X \cap \m = x_m)$ is a complete assignment of $\X$.
$(2)$ is due to the MAR assumption, i.e. that for a given $\m$ and $\x_o$ it holds for all $\x_m \in \R^{d_m}$: 
$$\PP(\M{=}\m|\X {\setminus} \m {=} \x_o, \X {\cap} \m {=} \x_m) = \PP(\M{=}\m|\X {\setminus} \m {=} \x_o)$$
We have shown that $\PP(\M{=}\m|o(\x,\m), \Y=y)$ does not depend on $y$, and thus does not affect the optimal prediction rule in claim~\ref{claim:optimal_rule}.  It may therefore be dropped, and we obtain the marginalized Bayes predictor.
\end{proof}

Having proved that in the MAR setting, classification through marginalization leads to optimal performance, we now move on to
show that the same is not true for classification through data-imputation. Though there are many methods
to perform data-imputation, i.e. to complete missing values given the observed ones, all of these methods
can be seen as the solution of the following optimization problem, or more typically its approximation:
\begin{align*}
g(\x \odot \m) = \argmax_{\x' \in \R^s \wedge \forall i: m_i = 1 \rightarrow x'_i = x_i} \PP(\X=\x') 
\end{align*}
Where $g(\x \odot \m)$ is the most likely completion of $\x \odot \m$. When data-imputation is carried out
for classification purposes, one is often interested in data-imputation conditioned on a given class $Y=y$, i.e.:
\begin{align*}
g(\x \odot \m; y) = \argmax_{\x' \in \R^s \wedge \forall i: m_i = 1 \rightarrow x'_i = x_i} \PP(\X=\x'|\Y=y) 
\end{align*}
Given a classifier $h:\R^s \to [K]$ and an instance $\x$ with missing values according to $\m$,
classification through data-imputation is simply the result of applying $h$ on the output of $g$. When $h$
is the optimal classifier for complete data, i.e.~the Bayes predictor, we end up with one of the following prediction rules:
\begin{align*}
\textrm{Unconditional:} &\, h(\x \odot \m) = \argmax_y \PP(\Y = y | \X = g(\x \odot \m))\\
\textrm{Conditional:} & \, h(\x \odot \m) = \argmax_y \PP(\Y = y | \X = g(\x \odot \m; y))
\end{align*}

\begin{table}
\centering
\begin{tabular}{ccccc}
\toprule
$X_1$ & $X_2$ & $Y$ &     Weight       & Probability ($\epsilon = 10^{-4}$) \\ 
\midrule
  $0$   &  $0$   & $0$ & $1 - \epsilon$ & $16.665\%$ \\
  $0$   &  $1$   & $0$ &        $1$          & $16.667\%$ \\
  $1$   &  $0$   & $0$ & $1 - \epsilon$ & $16.665\%$ \\
  $1$   &  $1$   & $0$ &         $1$         & $16.667\%$ \\
  $0$   &  $0$   & $1$ &         $0$         & $  0.000\%$ \\
  $0$   &  $1$   & $1$ & $1 + \epsilon$ & $16.668\%$ \\
  $1$   &  $0$   & $1$ &         $0$         & $  0.000\%$ \\
  $1$   &  $1$   & $1$ & $1 + \epsilon$ & $16.668\%$ \\  
\bottomrule         
\end{tabular}
\caption{Data distribution over the space $\X~{\times}~\Y~=~\{0,1\}^2~{\times}~\{0,1\}$ that serves as
             the example for the sub-optimality of classification through data-imputation (proof of claim~\ref{claim:data_imp_subopt}).}
\label{table:counter_example}
\end{table}

\begin{claim} \label{claim:data_imp_subopt}
There exists a data distribution $\D$ and MAR missingness distribution $\Q$ s.t. the accuracy of classification
through data-imputation is almost half the accuracy of the optimal marginalized Bayes predictor, with an absolute
gap of more than $33$ percentage points.
\end{claim}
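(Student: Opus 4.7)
The plan is to exhibit an explicit MAR missingness distribution on top of the joint distribution already tabulated in Table~\ref{table:counter_example}, and then compare the 0--1 accuracy of the marginalized Bayes predictor against both the unconditional and conditional data-imputation classifiers. Specifically, I would take the deterministic missingness rule $\Q(\M=(1,0)\mid\X=\x)\equiv 1$, so that $X_2$ is always missing and $X_1$ is always observed. Since $\Q$ does not depend on $\x$ at all, it is MCAR and in particular MAR, so Corollary~\ref{corollary:mar} applies and the marginalized Bayes rule is optimal.

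Next I would carry out the direct probability computations on the six nonzero atoms of the table. For the marginalized Bayes predictor, I need $\PP(\Y{=}y \mid X_1{=}x_1)$. By symmetry of the weights in $X_1$, one gets $\PP(\Y{=}0 \mid X_1{=}0)=\PP(\Y{=}0 \mid X_1{=}1)=(2-\epsilon)/3>1/2$, so the rule always predicts $Y=0$, achieving accuracy exactly $\PP(\Y{=}0)=(2-\epsilon)/3$. For the imputation predictor, I would compute the most likely completion of the missing coordinate. For both values of $X_1$, one has $\PP(X_1{=}x_1,X_2{=}0)=(1-\epsilon)/6$ while $\PP(X_1{=}x_1,X_2{=}1)=(2+\epsilon)/6$, hence the unique maximizer is $\hat{x}_2=1$. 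The Bayes classifier applied to the completed instance $(x_1,1)$ then gives $\PP(\Y{=}1 \mid X_1{=}x_1,X_2{=}1)=(1+\epsilon)/(2+\epsilon)>1/2$, so the imputation rule always predicts $Y=1$ and attains accuracy $\PP(\Y{=}1)=(1+\epsilon)/3$. For the conditional imputation variant, the same calculation with $P(X_2\mid X_1,Y)$ yields $\hat{x}_2=1$ for every $(x_1,y)$ (the atoms $(x_1,0,1)$ have weight $0$, and the $Y=0$ atoms prefer $X_2=1$ by a factor of $1/(1-\epsilon)$), so both imputation variants produce the same deterministic prediction $Y=1$.

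Subtracting gives an accuracy gap of $(2-\epsilon)/3 - (1+\epsilon)/3 = (1-2\epsilon)/3$; for $\epsilon=10^{-4}$ this is approximately $0.3333$, i.e.\ above $33$ percentage points, while the ratio of accuracies is $(1+\epsilon)/(2-\epsilon)$, arbitrarily close to $1/2$. This yields both quantitative conclusions required by the claim.

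The only subtle point I anticipate is justifying why \emph{every} reasonable data-imputation strategy suffers, not just the specific $\argmax$ rule: since the optimizations defining $g$ have unique maximizers here (the $\epsilon$ perturbation is precisely what breaks the ties present in the unperturbed distribution), any imputation procedure aimed at the MAP completion of $\x$, with or without conditioning on $y$, produces $\hat{x}_2=1$. Thus both displayed variants of ``classify-after-imputation'' collapse to the constant predictor $Y=1$, and the $(2-\epsilon)/3$ versus $(1+\epsilon)/3$ comparison is forced. Everything else is bookkeeping on a table of eight atoms.
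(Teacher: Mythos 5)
Your proposal is correct and follows essentially the same route as the paper: the identical table of weighted atoms, the same deterministic (hence MCAR/MAR) mask making $X_2$ always missing, and the same computation showing both imputation variants collapse to the constant predictor $Y{=}1$ with accuracy $(1+\epsilon)/3$ versus $(2-\epsilon)/3$ for the marginalized Bayes predictor. Your explicit check that the $\epsilon$-perturbation breaks the ties in the $\argmax$ defining $g$, so that both the conditional and unconditional imputation rules are forced to the same completion, is a slightly more careful rendering of a point the paper states without elaboration.
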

\begin{proof}
For simplicity, we will give an example for a discrete distribution over the binary set
$\X~{\times}~\Y~{=}~\{0,1\}^2~{\times}~\{0,1\}$. Let $1~{>}~\epsilon~{>}~0$ be some small positive number, and we define $\D$
according to table~\ref{table:counter_example}, where each triplet $(x_1,x_2,y) \in \X{\times}\Y$ is assigned a positive weight,
which through normalization defines a distribution over $\X{\times}\Y$. The missingness distribution $\Q$ is defined
s.t. $P_\Q(M_1 = 1, M_2 = 0 | X = \x) = 1$ for all $\x \in \X$, i.e. $X_1$ is always observed and $X_2$ is always missing, which
is a trivial MAR distribution. Given the above data distribution $\D$, we can easily calculate the exact accuracy of the optimal
data-imputation classifier and the marginalized Bayes predictor under the missingness distribution $\Q$, as well as the standard
Bayes predictor under full-observability. First notice that whether we apply conditional or unconditional data-imputation, and
whether $X_1$ is equal to $0$ or $1$, the completion will always be $X_2 = 1$ and the predicted class will always be $Y=1$.
Since the data-imputation classifiers always predict the same class $Y=1$ regardless of their input, the probability of success
is simply the probability $P(Y=1) = \frac{1 + \epsilon}{3}$ (for $\epsilon = 10^{-4}$ it equals approximately $33.337\%$).
Similarly, the marginalized Bayes predictor always predicts $Y=0$ regardless of its input, and so its probability of success is
$P(Y = 0) = \frac{2 - \epsilon}{3}$ (for $\epsilon = 10^{-4}$ it equals approximately $66.663\%$), which is almost double
the accuracy achieved by the data-imputation classifier. Additionally, notice that the marginalized Bayes predictor achieves
almost the same accuracy as the Bayes predictor under full-observability, which equals exactly $\frac{2}{3}$.

\end{proof}

% EFFICIENT MARGINALIZATION WITH TMMS
\section{Efficient Marginalization with Tensorial Mixture Models} \label{sec:missing_data:margin}

As discussed above, with generative models optimal classification under missing data (in the MAR setting)
is oblivious to the specific missingness distribution. However, it requires tractable marginalization over missing values. 
In this section we show that TMMs bring forth extremely efficient marginalization, requiring only a
single forward pass through the corresponding ConvAC.

Recall from sec.~\ref{sec:model} and~\ref{sec:training} that a TMM classifier realizes the following form:
\begin{align}
P(\x_1,\ldots,\x_N|Y{=}y) &= \sum\nolimits_{d_1,\ldots,d_N}^M P(d_1,\ldots,d_N|Y{=}y) \prod\nolimits_{i=1}^{N} P(\x_i | d_i; \theta_{d_i})
\label{eq:mc_tmm}
\end{align}
Suppose now that only the local structures $\x_{i_1}\ldots\x_{i_V}$ are observed, and we would like to
marginalize over the rest. Integrating eq.~\ref{eq:mc_tmm} gives:
\begin{align*}
P(\x_{i_1},\ldots,\x_{i_V}|Y{=}y) &= \sum\nolimits_{d_1,\ldots,d_N}^M P(d_1,\ldots,d_N|Y{=}y) \prod\nolimits_{v=1}^{V} P(\x_{i_v} | d_{i_v}; \theta_{d_{i_v}})
\end{align*}
from which it is evident that the same network used to compute $P(\x_1,\ldots,\x_N|Y{=}y)$, can be used
to compute $P(\x_{i_1},\ldots,\x_{i_V}|Y{=}y)$~--~all it requires is a slight adaptation of the representation
layer. Namely, the latter would represent observed values through the usual likelihoods, whereas missing
(marginalized) values would now be represented via constant ones:
\begin{align*}
\textrm{rep}(i,d) &= \begin{cases}
              \qquad1 & \textrm{, $\x_i$ is missing (marginalized)} \\
              P(\x_i|d;\Theta) & \textrm{, $\x_i$ is visible (not marginalized)}
       \end{cases}
\end{align*}
More generally, to marginalize over individual coordinates of the local structure $\x_i$, 
it is sufficient to replace $\textrm{rep}(i,d)$ by its respective marginalized mixing component.

To conclude, with TMMs marginalizing over missing values is just as efficient as plain inference~--~requires
only a single pass through the corresponding network. Accordingly, the marginalized Bayes predictor (eq.~\ref{eq:mbayes})
is realized efficiently, and classification under missing data (in the MAR setting) is optimal (under generative assumption), regardless of the missingness distribution.

\section{Extended Discussion on Generative Models Based on Neural Networks} \label{app:extended_related_works}

There are many generative models realized through neural networks, and convolutional networks in particular.
Of these models, one of the most successful to date is the method of Generative Adversarial
Networks~\citep{Goodfellow:2014td}, where a network is trained to generate instances from the data
distribution, through a two-player mini-max game. While there are numerous applications for learning to generate
data points, e.g. inpainting and super-resolution, it cannot be used for computing the likelihood of the data.
Other generative networks do offer inference, but only approximate.
Variational Auto-Encoders~\citep{Kingma:2013tz}
use a variational lower-bound on the likelihood function. GSNs~\citep{Bengio:2013vx},
DPMs~\citep{SohlDickstein:2015vq} and MPDBMs~\citep{Goodfellow:2013vm} are additional methods along this line. The latter is especially
noteworthy for being a generative classifier that can approximate the marginal likelihoods conditioned on each
class, and for being tested on classification under missing data.

Some generative neural networks are capable of tractable inference, but not of tractable marginalization.
\citet{Dinh:2014vu} suggest a method for designing neural networks that realize an invertible transformation
from a simple distribution to the data distribution. Inverting the network brings forth tractable inference, yet partial
integration of its density function is still intractable. Another popular method for tractable inference, central to
both PixelRNN~\citep{vandenOord:2016um} and NADE~\citep{JMLR:v17:16-272}, is the factorization of the probability distribution according
to $\PP(x_1,\ldots,x_d) = \prod_{i=1}^d \PP(x_i | x_{i-1},\ldots,x_1)$, and realization of $\PP(x_i|x_{i-1},\ldots,x_1)$ as
a neural network. Based on this construction, certain
marginal distributions are indeed tractable to compute, but most are not. Orderless-NADE partially addresses this issue
by using ensembles of models over different orderings of its input. However, it can only estimate the marginal distributions,
and has no classifier analogue that can compute class-conditional marginal likelihoods, as required
for classification under missing data.

\section{Image Generation and Network Visualization}\label{app:exp:vis}

\begin{figure}
\centering
\includegraphics[width=0.85\columnwidth]{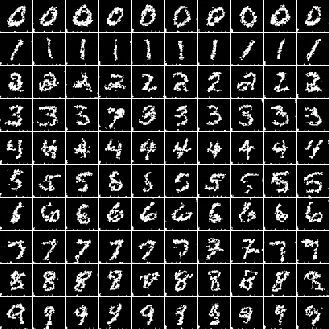}
\caption{Generated digits samples from the HT-TMM model trained on the MNIST dataset.}
\label{fig:fulldigits}
\end{figure}

\begin{figure*}
\centering
\includegraphics[width=0.3\linewidth]{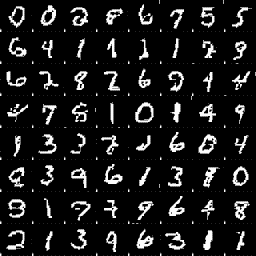}
\includegraphics[width=0.3\linewidth]{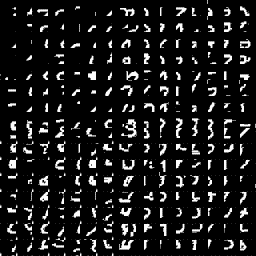}
\includegraphics[width=0.3\linewidth]{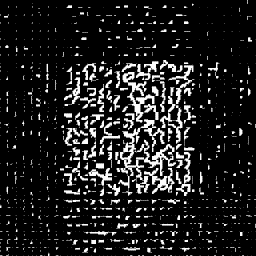}
\caption{Visualization of the HT-TMM model. Each of the  images above visualize
             a different layer of the model and consists of several samples generated from latent
             variables at different spatial locations conditioned on randomly selected channels. The
             leftmost image shows samples taken from the 5th layer which consists of just a single
             latent variable with 512 channels. The center image shows samples taken from the
             4th layer, which consists of $2 \times 2$ grid of latent variables with 256 channels each.
             The image is divided to 4 quadrants, each contains samples taken from the respective
             latent variable at that position. The rightmost image shows samples from the 3rd layer,
             which consists of $4 \times 4$ grid of latent variables with 128 channels, and the image
             is similarly spatial divided into different areas matching the latent variables of the layer.  }
\label{fig:visnet}
\end{figure*}

Following the graphical model perspective of our models allows us to not only
generate random instances from the distribution, but to also generate the most
likely patches for each neuron in the network, effectively explaining its role
in the classification process. We remind the reader that every neuron in the 
network corresponds to a possible assignment of a latent variable in the graphical
model. By looking for the most likely assignments for each of its child nodes in
the graphical tree model, we can generate a patch that describes that neuron.
Unlike similar suggested methods to visualize neural networks~\citep{Zeiler:2014fra},
often relying on brute-force search or on solving some optimization problem to
find the most likely image, our method emerges naturally from the probabilistic
interpretation of our model.

In fig.~\ref{fig:fulldigits}, we can see conditional samples generates for each
digit, while in fig.~\ref{fig:visnet} we can see a visualization of the top-level
layers of network, where each small patch matches a different neuron in the
network. The common wisdom of how ConvNets work is by assuming that
simple low-level features are composed together to create more and more complex
features, where each subsequent layer denotes features of higher
abstraction~--~the visualization of our network clearly demonstrate this hypothesis
to be true for our case, showing small strokes iteratively being composed into
complete digits.

\section{Detailed Description of the Experiments}\label{app:exp_details}

Experiments are meaningful only if they could be reproduced by other proficient individuals. Providing sufficient details
to enable others to replicate our results is the goal of this section. We hope to accomplish this by making our code
public, as well as documenting our experiments to a sufficient degree allowing for their reproduction from scratch.
Our complete implementation of the models presented in this paper, as well as our modifications to other open-source projects
and scripts used in the process of conducting our experiments, are available at our Github repository:
\githuburl{Generative-ConvACs}. We additionally wish to invite readers to contact the authors, if they deem
the following details insufficient in their process to reproduce our results. 

\subsection{Description of Methods}
In the following we give concise descriptions of each classification method we have used in our experiments.
The results of the experiment on MP-DBM~\citep{Goodfellow:2013vm} were taken directly from the paper and
were not conducted by us, hence we do not cover it in this section. We direct the reader to that article for exact
details on how to reproduce their results.

\subsubsection{Robust Linear Classifier}

In~\cite{Dekel:2008jb}, binary linear classifiers were trained by formulating their optimization as a quadric program
under the constraint that some of its features could be deleted, i.e. their original value was changed to zero.
While the original source code was never published, the authors have kindly agreed to share with us their
code, which we used to reproduced their results, but on larger datasets. The algorithm has only a couple
hyper-parameters, which were chosen by a grid-search through a cross-validation process. For details
on the exact protocol for testing binary classifiers on missing data, please see sec.~\ref{app:sec:exp:binary}.

\subsubsection{K-Nearest Neighbors}

K-Nearest Neighbors~(KNN) is a classical machine learning algorithm used for both regression and classification tasks.
Its underlying mechanism is finding the $k$~nearest examples (called neighbors) from the training set, $(\x_1,y_1),\ldots,(\x_k,y_k) \in S$,
according to some metric function~$d(\cdot,\cdot):\X \times \X \to \R_+$, after which a summarizing function~$f$ is applied to the targets
of the $k$~nearest neighbors to produce the output $y^* = f(y_1,\ldots,y_k)$. When KNN is used for classification, $f$~is typically
the majority voting function, returning the class found in most of the $k$ nearest neighbors.

In our experiments we use KNN for classification under missing data, where the training set consists of complete examples with no
missing data, but at classification time the inputs have missing values. Given an input with missing values $\x \odot \m$ and an
example $\x'$ from the training set, we use a modified Euclidean distance metric, where we compare the distance only against
the non-missing coordinates of $\x$, i.e. the metric is defined by~$d(\x', \x~\odot~\m)~=~\sum_{i:m_i = 1} \left(x'_i - x_i \right)^2$.
Through a process of cross-validation we have chosen $k=5$ for all of our experiments. Our implementation of KNN is based
on the popular \emph{scikit-learn} python library~\citep{scikit-learn}.

\subsubsection{Convolutional Neural Networks}
The most widespread and successful discriminative method nowadays are Convolutional Neural Networks~(ConvNets).
Standard ConvNets are represented by a computational graph consisted of different kinds of nodes, called layers,
with a convolutional-like operators applied to their inputs, followed by a non-linear point-wise activation function,
e.g. $\max(0,x)$ known as ReLU.

For our experiments on MNIST, both with and without missing data, we have used the LeNeT ConvNet
architecture~\citep{LeCun:1998hy} that is bundled with Caffe~\citep{Jia:2014up}, trained for 20,000 iterations using
SGD with $0.9$ momentum and $0.01$ base learning rate, which remained constant for 10,000 iterations, followed
by a linear decrease to $0.001$ for another 5,000 iterations, followed by a linear decrease to $0$ learning rate for
the remaining 5,000 iterations. The model also used $l_2$-regularization (also known as weight decay), which was
chosen through cross-validation for each experiment separately. No other modifications were made to the model or
its training procedure.

For our experiments on NORB, we have used an ensemble of 3 ConvNets, each using the following architecture:
$5{\times}5$ convolution with 128 output channels, $3{\times}3$ max pooling with stride 2,
ReLU activation, $5{\times}5$ convolution with 128 output channels, ReLU activation,
dropout layer with probability 0.5, $3{\times}3$ average pooling with stride 2, $5{\times}5$ convolution with
256 output channels, ReLU activation, dropout layer with probability 0.5, $3{\times}3$ average pooling with stride 2,
fully-connected layer with 768 output channels, ReLU activation, dropout layer with probability 0.5, and ends with
fully-connected layer with 5 output channels. The stereo images were represented as a two-channel input
image when fed to the network. During training we have used data augmentation consisting of randomly
scaling and rotation transforms. The networks were trained for 40,000 iterations using SGD with
$0.99$ momentum and $0.001$ base learning rate, which remained constant for 30,000 iterations, followed by
a linear decrease to $0.0001$ for 6000 iterations, followed by  a linear decrease to $0$ learning rate for the remaining
4,000 iterations. The model also used $0.0001$ weight decay for additional regularization. 

When ConvNets were trained on images containing missing values, we passed the network the original image with
missing values zeroed out, and an additional binary image as a separate channel, containing $1$ for missing
values at the same spatial position, and $0$ otherwise -- this missing data format is sometimes known as
\emph{flag data imputation}. Other formats for representing missing values were tested (e.g. just using zeros for
missing values), however, the above scheme performed significantly better than other formats. In our experiments,
we assumed that the training set was complete and missing values were only present in the test set. In order to
design ConvNets that are robust against specific missingness distributions, we have simulated missing values
during training, sampling a different mask of missing values for each image in each mini-batch. As covered in
sec.~\ref{sec:exp}, the results of training ConvNets directly on simulated missingness distributions resulted in
classifiers which were biased towards the specific distribution used in training, and performed worse on other
distributions compared to ConvNets trained on the same distribution.

In addition to training ConvNets directly on missing data, we have also used them as the classifier for testing
different data imputation methods, as describe in the next section.

\subsubsection{Classification Through Data Imputation}

The most common method for handling missing data, while leveraging available discriminative classifiers,
is through the application of \emph{data imputation}~--~an algorithm for the completion of missing
values~--~and then passing the results to a classifier trained on uncorrupted dataset. We have tested
five different types of data imputation algorithms:
\begin{itemize}
\item Zero data imputation: replacing every missing value by zero.
\item Mean data imputation: replacing every missing value by the mean value computed over the dataset.
\item Generative data imputation: training a generative model and using it to complete the missing values
         by finding the most likely instance that coincides with the observed values, i.e. solving the following
         \begin{align*}
         g(\x \odot \m) &= \argmax_{\x' \in \R^s \wedge \forall i, m_i = 1 \rightarrow x'_i = x_i} P(X=\x') 
         \end{align*}
         We have tested the following generative models:
         \begin{itemize}
         \item Generative Stochastic Networks~(GSN)~\citep{Bengio:2013vx}: We have used their original source code from
                  \url{https://github.com/yaoli/GSN}, and trained their example model on MNIST for 1000 epochs. Whereas in the
                  original article they have tested completing only the left or right side of a given image, we have modified their
                  code to support general masks. Our modified implementation can be found at \githuburl{GSN}.
         \item Non-linear Independent Components Estimation~(NICE)~\citep{Dinh:2014vu}: We have used their original source
                  code from \url{https://github.com/laurent-dinh/nice}, and trained it on MNIST using their example code without changes.
                  Similarly to our modification to the GSN code, here too we have adapted their code to support general masks
                  over the input. Additionally, their original inpainting code required 110,000 iterations, which we have reduced to
                  just 8,000 iterations, since the effect on classification accuracy was marginal. For the NORB dataset, we have
                  used their CIFAR10 example, with lower learning rate of $10^{-4}$. Our modified code can be 
                  found at \githuburl{nice}.
         \item Diffusion Probabilistic Models~(DPM)~\citep{SohlDickstein:2015vq}: We have user their original source code
                  from \url{https://github.com/Sohl-Dickstein/Diffusion-Probabilistic-Models}, and trained it on MNIST using their
                  example code without changes. Similarly to our modifications to GSN, we have add support for a general
                  mask of missing values, but other than that kept the rest of the parameters for inpainting unchanged.
                  For NORB we have used the same model as MNIST. We have tried using their CIFAR10 example for NORB,
                  however, it produced exceptions during training.
                  Our modified code can be found at \githuburl{Diffusion-Probabilistic-Models}.
         \end{itemize}
\end{itemize}

\subsubsection{Tensorial Mixture Models}

For a complete theoretical description of our model please see the body of the article. Our models were
implemented by performing all intermediate computations in log-space, using numerically aware operations.
In practiced, that meant our models were realized by the SimNets architecture~\citep{simnets1, simnets2},
which consists of Similarity layers representing gaussian distributions, MEX layers representing weighted
sums performed on log-space input and outputs, as well as standard pooling operations. The learned
parameters of the MEX layers are called \emph{offsets}, which represents the weights of the weighted sum,
but saved in log-space. The parameters of the MEX layers can be optionally shared between spatial regions,
or alternatively left with no parameter sharing at all. Additionally, when used to implement our generative
models, the offsets are normalized to have a soft-max~(i.e., $\log\left(\sum_i \exp(x_i)\right)$) of zero.

The network architectures we have tested in this article, consists of $M$ different Gaussian mixture components
with diagonal covariance matrices, over non-overlapping patches of the input of size $2 \times 2$, which were
implemented by a similarity layer as specified by the SimNets architecture, but with an added gaussian normalization term.

We first describe the architectures used for the MNIST dataset.
For the CP-TMM model, we used $M=800$, and following the similarity layer is a $1 \times 1$~MEX layer with no parameter sharing
over spatial regions and $10$ output channels. The model ends with a global sum pooling operation, followed by another $1 \times 1$ MEX
layer with $10$ outputs, one for each class. The HT-TMM model starts with the similarity layer with $M=32$, followed by a sequence
of four pairs of $1 \times 1$~MEX layer followed by $2 \times 2$ sum pooling layer, and after the pairs and additional
$1 \times 1$~MEX layer lowering the  outputs of the model to $10$ outputs as the number of classes. The number of output
channels for each MEX layer are as follows 64-128-256-512-10. All the MEX layers in this network do not use parameter sharing,
except the first MEX layer, which uses a repeated sharing pattern of $2 \times 2$ offsets, that analogous to a
$2 \times 2$~convolution layer with stride $2$. Both models were trained with the losses described in sec.~\ref{sec:training},
using the Adam SGD variant for optimizing the parameters, with a base learning rate of $0.03$, and $\beta_1 = \beta_2 = 0.9$.
The models were trained for 25,000 iterations, where the learning rate was dropped by $0.1$ after 20,000 iterations.

For the NORB dataset, we have trained only the HT-TMM model with  $M=128$ for the similarity layer. The MEX layers use
the same parameter sharing scheme as the one for MNIST, and the number of output channels for each MEX layer are
as follows: 256-256-256-512-5. Training was identical to the MNIST models, with the exception of using 40,000 iterations
instead of just 25,000. Additionally, we have used an ensemble of 4 models trained separately, each trained using a different
generative loss weight (see below for more information). We have also used the same data augmentation methods (scaling and rotation)
which were used in training the ConvNets for NORB used in this article.

The standard $L_2$ weight regularization~(sometimes known as weight decay) did not work well on our models,
which lead us to adapt it to better fit to log-space weights, by minimizing $\lambda \sum_i \left(\exp\left(x_i\right)\right)^2$ instead
of $\lambda || \x ||_2~=~\lambda\sum_i \x_i^2$, where the parameter $\lambda$ was chosen through cross-validation.
Additionally, since even with large values of $\lambda$ our model was still overfitting, we have added another form
of regularization in the form of \emph{random marginalization} layers. A random marginalization layer, is similar in concept
to dropout, but instead of zeroing activations completely in random, it choses spatial locations at random, and then zero out
the activations at those locations for all the channels. Under our model, zeroing all the activations in a layer at a specific location, is
equivalent to marginalizing over all the inputs for the receptive field for that respective location. We have used random marginalization
layers in between all our layers during training, where the probability for zeroing out activations was chosen through cross-validation
for each layer separately. Though it might raise concern that random marginalization layers could lead to biased results toward
the missingness distributions we have tested it on, in practice the addition of those layers only helped improve our results under
cases where only few pixels where missing.

Finally, we wish to discuss a few optimization tricks which had a minor effects compared to the above, but were nevertheless
very useful in achieving slightly better results. First, instead of optimizing directly the objective defined by eq.~\ref{eq:objective},
we add smoothing parameter $\beta$ between the two terms, as follows:
\begin{align*}
\Theta^* &= \argmin_\Theta -\sum_{i=1}^{\abs{S}} \log \frac{e^{N_\Theta(X^{(i)};Y^{(i)})}}{\sum\nolimits_{y=1}^{K}e^{N_\Theta(X^{(i)};y)}} \\
&\phantom{=\argmin_\Theta}- \beta\sum_{i=1}^{\abs{S}} \log \sum_{y=1}^{K}e^{N_\Theta(X^{(i)};y)}
\end{align*}
setting $\beta$ too low diminish the generative capabilities of our models, while setting it too high diminish the discriminative 
performance. Through cross-validation, we decided on the value $\beta=0.01$ for the models trained on MNIST, while for NORB
we have used a different value of $\beta$ for each of the models, ranging in $\{0.01,0.1,0.5,1\}$. Second, we found that
performance increased if we normalized activations before applying the $1 \times 1$ MEX operations. Specifically, we
calculate the soft-max over the channels for each spatial location which we call the activation norm, and then subtract it
from every respective activation. After applying the MEX operation, we add back the activation norm. Though might not
be obvious at first, subtracting a constant from the input of a MEX operation and adding it to its output is equivalent does
not change the mathematical operation. However, it does resolve the numerical issue of adding very large activations to
very small offsets, which might result in a loss of precision. Finally, we are applying our model in different translations of
the input and then average the class predictions. Since our model can marginalize over inputs, we do not need to crop
the original image, and instead mask the unknown parts after translation as missing. Applying a similar trick to standard
ConvNets on MNIST does not seem to improve their results. We believe this method is especially fit to our model, is
because it does not have a natural treatment of overlapping patches like ConvNets do, and because it is able to
marginalize over missing pixels easily, not limiting it just to crop translation as is typically done.

\subsection{Description of Experiments}

In this section we will give a detailed description of the protocol we have used during our experiments.

\subsubsection{Binary Digit Classification under Feature Deletion Missing Data}\label{app:sec:exp:binary}

This experiment focuses on the binary classification problem derived from MNIST, by limiting the
number of classes to two different digits at a time. We use the same non-zero feature deletion
distribution as suggested by \citet{Globerson:2006jv}, i.e. for a given image we uniformly
sample a set of $N$ non-zero pixels from the image (if the image has less than $N$ non-zero
pixels then they are non-zero pixels are chosen), and replace their values with zeros. This
type of missingness distribution falls under the MNAR type defined in sec.\ref{sec:missing_data}.

We test values of $N$ in $\{0, 25, 50, 75, 100, 125, 150\}$. For a given value of $N$, we train
a separate classifier on each digit pair classifier on a randomly picked subset of the dataset
containing 300 images per digit (600 total). During training we use a fixed validation set with
1000 images per digit. After picking the best classifier according to the validation set, the
classifier is tested against a test set with a 1000 images per digits with a randomly chosen
missing values according to the value of $N$. This experiment is repeated 10 times for each
digit pair, each time using a different subset for the training set, and a new corrupted test set.
After conducting all the different experiments, all the accuracies are averaged for each value
of $N$, which are reported in table~\ref{table:exp_shamir}.

\subsubsection{Multi-class Digit Classification under MAR Missing Data}

This experiment focuses on the complete multi-class digit classification of the MNIST dataset,
in the presence of missing data according to different missingness distributions. Under this setting,
only the test set contains missing values, whereas the training set does not. We test two kinds of
missingness distributions, which both fall under the MAR type defined in sec.\ref{sec:missing_data}.
The first kind, which we call \emph{i.i.d. corruption}, each pixel is missing with a fixed probability $p$.
the second kind, which we call \emph{missing rectangles corruption}, The positions of $N$ rectangles
of width $W$ or chosen uniformly in the picture, where the rectangles can overlap one another.
During the training stage, the models to be tested are not to be biased toward the specific missingness
distributions we have chosen, and during the test stage, the same classifier is tested against all
types of missingness distributions, and without supplying it with the parameters or type of the missingness
distribution it is tested against. This rule prevent the use of ConvNets trained on simulated missingness
distributions. To demonstrate that the latter lead to biased classifiers, we have conducted a separate
experiment just for ConvNets, where the previous rule is ignored, and we train a separate ConvNet
classifier on each type and parameter of the missingness distributions we have used. We then
tested each of those ConvNets on all other missingness distributions, the results of which are
in fig.~\ref{fig:convnet}, which confirmed our hypothesis.

\end{document}